\def\eqref#1{equation~\ref{#1}}
\def\1{\bm{1}}
\DeclareMathAlphabet{\mathsfit}{\encodingdefault}{\sfdefault}{m}{sl}
\SetMathAlphabet{\mathsfit}{bold}{\encodingdefault}{\sfdefault}{bx}{n}
\definecolor{darkblue}{rgb}{0,0,0.5}
\theoremstyle{plain}
\theoremstyle{definition}
\newtheorem{definition}{Definition}
\theoremstyle{remark}
\definecolor{Gray}{gray}{0.9}
\newcommand{\stdpm}[1]{\textcolor{gray}{\scriptsize $\pm$#1}}
\title{AMiD: Knowledge Distillation for LLMs with $\alpha$-mixture Assistant Distribution}
\author{
Donghyeok Shin\textsuperscript{\textmd{1}}, Yeongmin Kim\textsuperscript{\textmd{1}}, Suhyeon Jo\textsuperscript{\textmd{1}}, Byeonghu Na\textsuperscript{\textmd{1}}, Il-Chul Moon\textsuperscript{\textmd{1,2}} \\
\textsuperscript{\textmd{1}}Korea Advanced Institute of Science and Technology (KAIST), \textsuperscript{\textmd{2}}summary.ai \\
\texttt{\{tlsehdgur0,alsdudrla10,suhyeonjo,byeonghu.na,icmoon\}@kaist.ac.kr}
}
\begin{document}

\maketitle
\vspace{-1ex}
\begin{abstract}
\vspace{-1ex}
Autoregressive large language models (LLMs) have achieved remarkable improvement across many tasks but incur high computational and memory costs. Knowledge distillation (KD) mitigates this issue by transferring knowledge from a large teacher to a smaller student through distributional alignment. Previous studies have proposed various discrepancy metrics, but the capacity gap and training instability caused by near-zero probabilities, stemming from the high-dimensional output of LLMs, remain fundamental limitations. To overcome these challenges, several approaches implicitly or explicitly incorporating assistant distribution have recently been proposed. However, the past proposals of assistant distributions have been a fragmented approach without a systematic investigation of the interpolation path and the divergence. This paper proposes $\alpha$-mixture assistant distribution, a novel generalized family of assistant distributions, and $\alpha$-mixture distillation, coined AMiD, a unified framework for KD using the assistant distribution. The $\alpha$-mixture assistant distribution provides a continuous extension of the assistant distribution by introducing a new distribution design variable $\alpha$, which has been fixed in all previous approaches. Furthermore, AMiD generalizes the family of divergences used with the assistant distributions based on optimality, which has also been restricted in previous works. Through extensive experiments, we demonstrate that AMiD offers superior performance and training stability by leveraging a broader and theoretically grounded assistant distribution space. We release the code at \url{https://github.com/aailab-kaist/AMiD}.
\end{abstract}

\vspace{-1.5ex}
\section{Introduction}
\vspace{-1.5ex}
Autoregressive large language models (LLMs) have recently achieved remarkable advances, delivering outstanding performance across a wide spectrum of tasks and application domains~\citep{achiam2023gpt, touvron2023llama, team2024gemma}. However, their massive parameter scales impose prohibitive computational and memory costs, which hinder their deployment in practical applications. Accordingly, an essential objective for practical deployment is to compress these high-capacity models by reducing the parameter count while preserving their strong performance. 

Knowledge distillation (KD)~\citep{hinton2015distilling} is a widely adopted compression technique that transfers knowledge from a large teacher model to a smaller student model by aligning their token-level predictive distributions. The selection of a discrepancy metric is an important research topic in KD for LLMs. Several prior studies have proposed either (1) the use of various forms of divergence, including the capability of regulating the quality-diversity trade-off~\citep{wang2025abkd}, or (2) employing a combination of these divergences~\citep{agarwal2024policy,wu2025rethinking} as the discrepancy metric. However, these approaches do not fundamentally resolve the large capacity gap between the high-capacity teacher and smaller student models, and the optimization instability due to near-zero probabilities, which is prevalent in the high-dimensional probability space of LLMs.

A practical remedy is to introduce an \textit{assistant distribution} that interpolates teacher and student distributions to stabilize optimization and bridge this capacity gap. Recently, several methodologies have been proposed that either (1) utilize the discrepancy metric that inherently includes a specific form of assistant distribution~\citep{agarwal2024policy,ko2024distillm,ko2025distillm} or (2) explicitly model the assistant distribution~\citep{shing2025taid}. However, these approaches have generally been treated as independent recipes in different papers without a systematic study, which hinders the development of general and effective methodologies.

In this paper, we propose a generalized framework that integrates the fragmentarily employed assistant distribution and divergence. First, we interpret the existing assistant distributions from the information theory view, revealing that the existing methodology can be expressed as an $m$-mixture, which mixes two probability distributions via arithmetic mean, and an $e$-mixture, which mixes them via geometric mean. Next, we present a new assistant distribution family, coined $\alpha$-mixture assistant distribution, by extending the mean concept via the generalized $f_\alpha$ mean. The $\alpha$-mixture assistant distribution introduces a new design variable $\alpha$ for the assistant distribution, which adjusts the geometry of the interpolation path. Here, $\alpha$ is an independent parameter distinct from the well-utilized parameter $\lambda$, which controls the portion of interpolation. The $\alpha$-mixture assistant distribution not only includes the existing assistant distributions as a special case ($\alpha=\pm1$) but also provides several new assistant distribution that were not investigated in KD for LLMs area. 

Under the concept of $\alpha$-mixture assistant distribution, we investigate several properties of the $\alpha$-mixture assistant distribution, which are meaningful in KD for LLMs, such as the analysis with $\alpha$-divergence, controllable support via $\alpha$, and continuity with respect to $\alpha$. Next, we propose a new KD framework for LLMs, coined as \underline{$\alpha$}-\underline{mi}xture \underline{d}istillation (AMiD), which generalizes the optimization schemes of prior research by unifying both the assistant distribution and the divergence. AMiD aims to align the $\alpha$-mixture assistant distribution and either the teacher or student. We theoretically prove the optimality of AMiD, which enables us to achieve the primary goal of KD (teacher = student) even when employing arbitrary divergence, \(\alpha\), and \(\lambda\), under the perfect optimization assumption. Furthermore, through gradient analysis when employing $f$-divergence, we theoretically demonstrate that $\alpha$ adjusts the mode-covering and mode-seeking properties of the student distribution, with both toy experiments and real-world experiment results supporting this finding. Across various evaluation scenarios, our proposed framework AMiD consistently demonstrates superior performance compared to methodologies that do not utilize the assistant distribution and those employing limited assistant distribution.

\section{Preliminary}
\subsection{Knowledge Distillation for Large Language Models} \label{sec: preliminary: kd}
We denote the input prompt and output token sequences as $x$ and $y$, respectively, where $y \coloneq (y_1, y_2, \ldots, y_L) \in \mathcal{V}^{L}$ is a token sequence of length $L$, with each token drawn from the vocabulary set $\mathcal{V}$. Given the input $x$, an autoregressive large language model (LLM) outputs a next-token distribution $p(y_l | x,y_{<l})$, conditioned on both the prompt $x$ and the previously generated tokens $y_{<l} \coloneq (y_1, y_2, \ldots, y_{l-1})$. We assume access to two LLMs: a large fixed teacher model $p(y_l | x,y_{<l})$, and a smaller student model $q_\theta(y_l | x,y_{<l})$ parameterized by $\theta$. The goal of knowledge distillation (KD) for LLMs is to transfer the knowledge of the teacher into the student. Concretely, KD for LLMs is typically formulated as aligning the next-token distributions of the teacher and student:
\begin{align}
    \min_{\theta} \mathbb{E}_{(x,y) \sim \mathcal{D}} \!\left[ \sum_{l=1}^{L} D\!\left(p(y_{l} | x,y_{<l}), q_{\theta}(y_{l} | x,y_{<l})\right) \right]
\label{eq: objective of KD}
\end{align}
where $D$ denotes the divergence and the dataset $\mathcal{D}$ is composed of the predefined dataset~\citep{hinton2015distilling}, or various strategies using the student-generated outputs (SGOs): on-policy~\citep{lin2020autoregressive}, a mixed approach~\citep{agarwal2024policy,gu2024minillm,xu2025speculative}, and an adaptive off-policy~\citep{ko2024distillm}. For notational brevity, we omit the explicit dependence on $x$ and $y$ whenever it is clear from context, writing $p \coloneq p(y_l | x, y_{<l})$ and $q_\theta \coloneq q_\theta(y_l | x, y_{<l})$.

The choice of divergence $D$ plays a pivotal role in KD for LLMs. The widely used Kullback–Leibler (KL) divergence $D_{\text{KL}}(p \| q_\theta) \coloneq \sum_{k} p(k) \log \tfrac{p(k)}{q_\theta(k)}$ in KD ~\citep{hinton2015distilling,kim2016sequence} emphasizes mode-covering, often assigning mass to less informative regions. To mitigate this effect, the reverse KL divergence $D_{\text{RKL}}(p \| q_\theta) \coloneq D_{\text{KL}}(q_\theta \| p)$ is employed for its mode-seeking properties~\citep{gu2024minillm}, which possesses mode-seeking properties, but either choice entails a trade-off between quality and diversity. Recent studies address this by (1) combining divergences, e.g., GKD~\citep{agarwal2024policy} with the generalized Jensen–Shannon divergence $D_{\text{GJS}}(p | q_{\theta}) \coloneq \lambda D_{\text{KL}}\!\left(p \| \lambda p + (1-\lambda)q_{\theta}\right) + (1-\lambda)D_{\text{KL}}\!\left(q_\theta \| \lambda p + (1-\lambda)q_{\theta} \right)$, and (2) extending classical divergences to enable explicit control, as in ABKD~\citep{wang2025abkd}, which adopts the $\alpha$-$\beta$-divergence $D_{\text{AB}}$~\citep{cichocki2011generalized} as a generic framework. Concurrently, CSD~\citep{kim2026distillation} adapts a concrete score~\citep{meng2022concrete} to avoid the constraints of probability matching.

Meanwhile, several methodologies have recently been proposed to improve the optimization stability of KD for LLMs. \citet{ko2024distillm} leverages the skew KL divergence $D_{\text{SKL}}(p \| q_\theta) \coloneq D_{\text{KL}}\!\left(p \| \lambda p + (1-\lambda) q_{\theta}\right)$ and the skew reverse KL divergence $D_{\text{SRKL}}(p \| q_\theta) \coloneq D_{\text{KL}}\!\left(q_{\theta} \| \lambda p + (1-\lambda) q_{\theta}\right)$. TAID~\citep{shing2025taid} introduces an adaptive intermediate distribution that gradually shifts from the student’s initial distribution to the teacher distribution, i.e., $D_{\text{TAID}}(p \| q_\theta) \coloneq D_{\text{KL}}(r_{t} \| q_{\theta})$ where $r_{t} \coloneq \text{softmax}\!\left((1-\lambda_t) \cdot \text{logit}(q_\theta') + \lambda_t \cdot \text{logit}(p)\right)$ with time-dependent interpolation parameter $\lambda_t$, detached student logits $\text{logit}(q_\theta')$, and teacher logits $\text{logit}(p)$.

\subsection{$m$-mixture and $e$-mixture}
Mixture models are a standard tool for integrating information from multiple distributions. Information geometry~\citep{amari2016information,nielsen2020elementary,eguchi2022minimum} provides a dualistic structure on the manifold of probability distributions, characterized by two affine connections: the mixture connection and the exponential connection. These connections induce two natural ways of interpolating between distributions, commonly referred to as the \textit{$m$-mixture} and the \textit{$e$-mixture}. 

Given two probability distributions $p$ and $q$ defined on the same measureable space, the $m$-mixture is defined as a convex combination of $p$ and $q$:
\begin{align}
    p^{(m)}(x) \coloneq (1-t)p(x) + tq(x), \qquad t\in [0,1]
\end{align}

In contrast, the $e$-mixture is defined multiplicatively:
\begin{align}
    p^{(e)}(x) \coloneq \frac{p(x)^{t}q(x)^{1-t}}{Z(t)}, \qquad Z(t) \coloneq \int p(x)^{t}q(x)^{1-t}dx
\end{align}
The $m$-mixture forms a straight line in probability space, while the $e$-mixture forms one in log-probability space. Some studies leverage $m$- and $e$-mixtures, for example, to construct paths for annealed importance sampling~\citep{grosse2013annealing,masrani2021q}.

\subsection{Generalized $f$-mean}
\begin{wrapfigure}{r}{0.25\textwidth}
    \vspace{-2ex}
    \centering
    \includegraphics[width=0.25\textwidth]{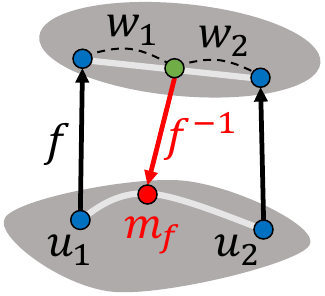}
    \caption{Illustration of generalized $f$-mean.}
    \label{fig: generalized fmean}
    \vspace{-2ex}
\end{wrapfigure}
Generalized $f$-mean~\citep{kolmogorov1930notion} is a generalized framework of the mean by using a monotonically increasing differentiable function $f:\mathbb{R}\rightarrow\mathbb{R}$. 
Given a set of weights $\{w_i\in\mathbb{R}^+ \mid \sum_{i}w_i =1\}$ and the set of corresponding input elements $\{u_i\in\mathbb{R}\}$, the generalized $f$-mean is defined as:
\begin{align}
    m_{f}\!\left(\{w_i\},\{u_i\}\right) \coloneq f^{-1}\!\left( \sum\limits_{i} w_{i}f(u_{i}) \right)
\label{eq: generalized f mean}
\end{align}
The $m_f$ applies a nonlinear transformation to the inputs, combines them with weights in the transformed domain, and maps the result back to the original domain. The well-known means, such as the arithmetic mean and geometric mean, have \textit{homogeneity}, which stands for a scale-free property $m_{f}\big(\{w_i\},\{c \cdot u_i\}\big) = c \cdot m_{f}\big(\{w_i\},\{u_i\}\big)$ for $c>0$. The generalized $f$-mean is homogeneous only when $f$ belongs to the unique class of functions \citep{hardy1952inequalities,amari2007integration}:
\begin{align}
    f(u)\coloneq f_{\alpha}(u)=
    \begin{cases}
        u^{\frac{1-\alpha}{2}}, \, \alpha \neq 1 \\
        \log{u}, \,\, \alpha = 1
    \end{cases}, \qquad u\in\mathbb{R^+}
\label{eq: f alpha}
\end{align}
This family includes various notable examples, such as the weighted arithmetic mean for $\alpha = -1$, the weighted geometric mean for $\alpha = 1$, the weighted harmonic mean for $\alpha=3$, and $\min\{ u_i \},\max\{ u_i \}$ for $\alpha \rightarrow \infty$ and $\alpha \rightarrow -\infty$, respectively.
\vspace{-1ex}
\section{Methodology}
\begin{figure}[t!]
\vspace{-5ex}
    \centering
    \begin{subfigure}{\textwidth}
        \centering
        \includegraphics[width=\textwidth]{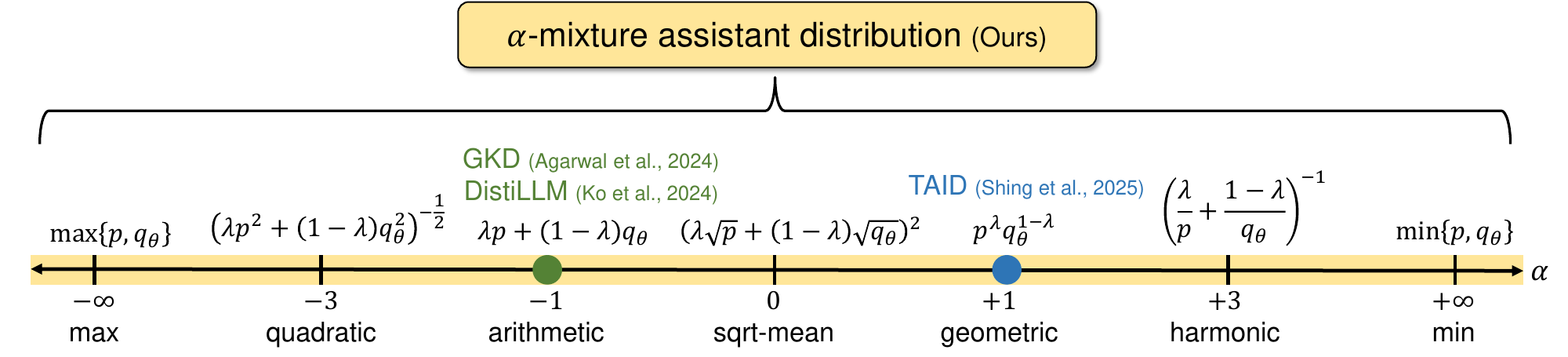}
        \caption{Illustration of $\alpha$-mixture assistant distributions $r_\theta^{(\alpha,\lambda)}$ with varying $\alpha$.}
        \vspace{1.5ex}
        \label{fig: alpha family}
    \end{subfigure}
    \begin{subfigure}{0.19\textwidth}
        \centering
        \includegraphics[width=\linewidth]{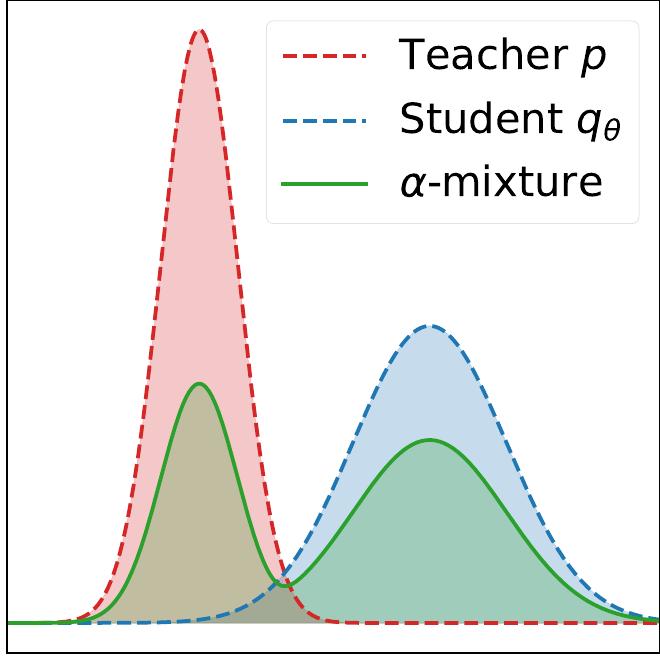}
        \caption{$\alpha = -3$}
        \label{fig: a=-3}
    \end{subfigure}
    \hfill
    \begin{subfigure}{0.19\textwidth}
        \centering
        \includegraphics[width=\linewidth]{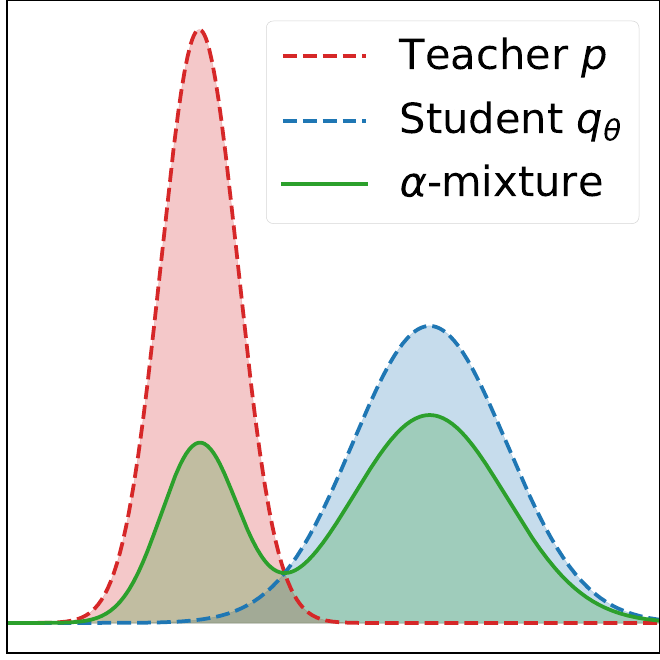}
        \caption{$\alpha = -1$}
        \label{fig: a=-1}
    \end{subfigure}
    \hfill
    \begin{subfigure}{0.19\textwidth}
        \centering
        \includegraphics[width=\linewidth]{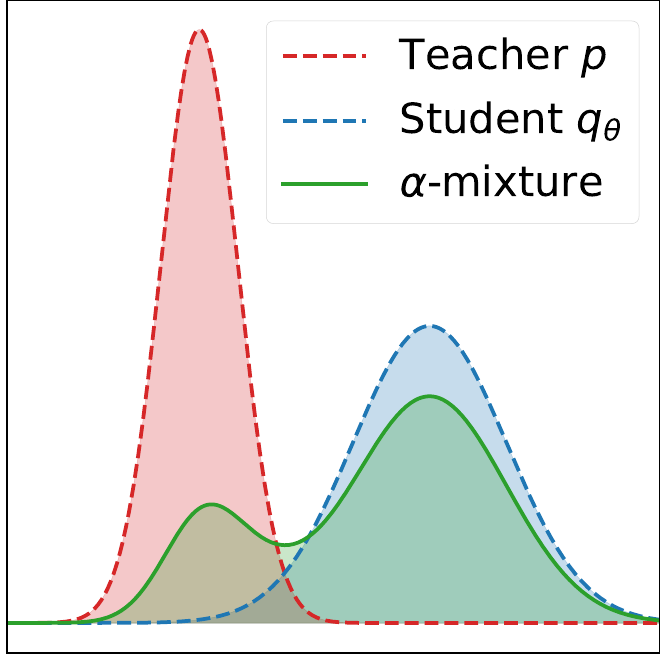}
        \caption{$\alpha = 0$}
        \label{fig: a=0}
    \end{subfigure}
    \hfill
    \begin{subfigure}{0.19\textwidth}
        \centering
        \includegraphics[width=\linewidth]{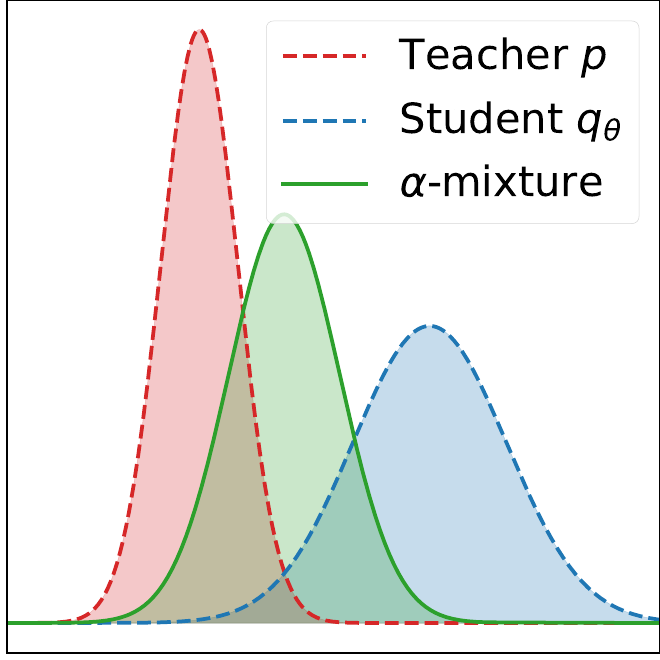}
        \caption{$\alpha = 1$}
        \label{fig: a=1}
    \end{subfigure}
    \hfill
    \begin{subfigure}{0.19\textwidth}
        \centering
        \includegraphics[width=\linewidth]{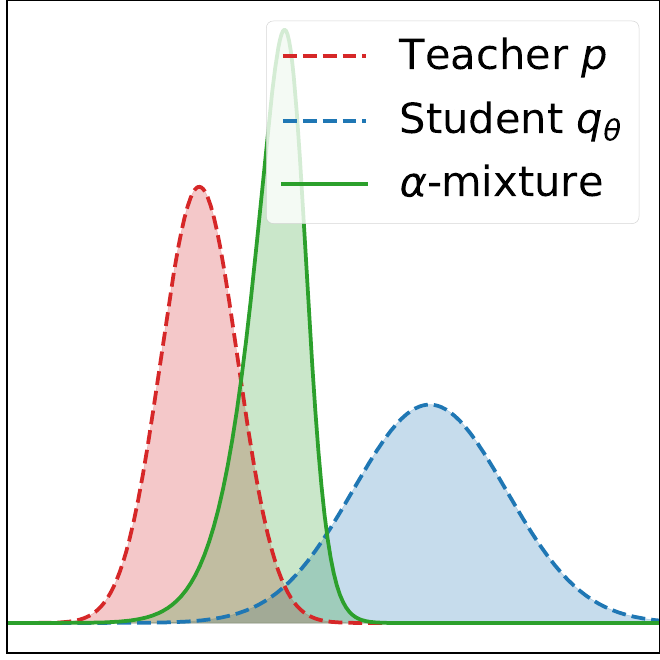}
        \caption{$\alpha = 3$}
        \label{fig: a=3}
    \end{subfigure}
    \vspace{-1.5ex}
    \caption{Visualization of the $\alpha$-mixture assistant distribution family. (a) The $\alpha$-mixture assistant distribution provides a generalized framework for assistant distributions, with prior studies~\citep{agarwal2024policy,ko2024distillm,shing2025taid} recoverable as special cases. (b-f) Illustration of the $\alpha$-mixture assistant distribution where $p=\mathcal{N}(0, {0.5}^2)$, $q_\theta=\mathcal{N}(3, 1^2)$, and $\lambda=0.3$. For $\alpha < 1$, the support of the $\alpha$-mixture assistant distribution corresponds to the union of the supports of $p$ and $q_\theta$, whereas for $\alpha \geq 1$, it corresponds to their intersection.}
    \label{fig: overview amad}
\vspace{-2ex}
\end{figure}
\vspace{-1ex}
This section introduces a new KD framework for LLMs, coined $\alpha$-mixture distillation (AMiD), which generalizes both the assistant distribution and the associated optimization scheme. Section~\ref{sec: methodology: motivation} reveals the connection among the existing assistant distributions and highlights the need for a systematic study. Section~\ref{sec: amad} proposes the $\alpha$-mixture assistant distribution, which provides a unified and generalized assistant distribution family via $\alpha$-mixture distribution. Finally, Section~\ref{sec: amid} extends the assistant-based KD objective into a generic divergence framework.

\subsection{Motivation} \label{sec: methodology: motivation}
Our primary motivation stems from the observation that recent studies inherently include the composition of the teacher distribution $p$ and the student distribution $q_{\theta}$, which we will refer as \textit{assistant distribution} $r_\theta$ in this paper. For example, several studies~\citep{agarwal2024policy,ko2024distillm} utilize the divergences that include $r_\theta \coloneq \lambda p + (1-\lambda)q_{\theta}$ with $\lambda\in[0,1]$, which is an weighted arithmetic mean, also known as \textit{$m$-mixture}. Moreover, we have newly discovered that the assistant distribution of TAID~\citep{shing2025taid} is \textit{$e$-mixture}, also known as a weighted geometric mean.
\begin{restatable}{proposition}{propTAID}
    The assistant distribution of TAID~\citep{shing2025taid} is $e$-mixture of $p$ and $q_\theta$:\footnote{We omit the time index $t$ and detached notation for the sake of uniformity.}
    \vspace{-1ex}
    \begin{align}
        r_\theta \coloneq \text{softmax}\!\left((1-\lambda) \cdot \text{logit}(q_\theta) + \lambda \cdot \text{logit}(p)\right) \propto p^{\lambda}q_{\theta}^{1-\lambda}
    \end{align}
    \vspace{-5ex}
\label{prop: taid}
\end{restatable}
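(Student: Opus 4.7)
The plan is to unfold the softmax on the right-hand side and use the defining relationship between a probability distribution and its logits, after which the claimed proportionality will fall out by elementary algebra.

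First I would write $\ell_p(k) := \mathrm{logit}(p)_k$ and $\ell_q(k) := \mathrm{logit}(q_\theta)_k$ for each token $k \in \mathcal{V}$, and recall that by definition of logits we have $p(k) = e^{\ell_p(k)}/Z_p$ and $q_\theta(k) = e^{\ell_q(k)}/Z_q$, where $Z_p := \sum_j e^{\ell_p(j)}$ and $Z_q := \sum_j e^{\ell_q(j)}$ are the usual partition constants. Equivalently, $e^{\ell_p(k)} = p(k)\,Z_p$ and $e^{\ell_q(k)} = q_\theta(k)\,Z_q$, which is the bridge between the logit-space interpolation and the probability-space product.

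Next I would expand the softmax coordinate-wise:
\begin{align*}
r(k) = \frac{\exp\!\bigl((1-\lambda)\ell_q(k) + \lambda\,\ell_p(k)\bigr)}{\sum_{j} \exp\!\bigl((1-\lambda)\ell_q(j) + \lambda\,\ell_p(j)\bigr)}
= \frac{\bigl(e^{\ell_q(k)}\bigr)^{1-\lambda}\bigl(e^{\ell_p(k)}\bigr)^{\lambda}}{\sum_{j}\bigl(e^{\ell_q(j)}\bigr)^{1-\lambda}\bigl(e^{\ell_p(j)}\bigr)^{\lambda}}.
\end{align*}
Substituting the logit-to-probability identity in both the numerator and the denominator introduces a common factor $Z_p^{\lambda} Z_q^{1-\lambda}$ in every term, which then cancels between numerator and denominator.

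After cancellation the expression becomes $r(k) = p(k)^{\lambda} q_\theta(k)^{1-\lambda} / \sum_j p(j)^{\lambda} q_\theta(j)^{1-\lambda}$, which is exactly the $e$-mixture with weight $\lambda$ assigned to $p$ and $1-\lambda$ to $q_\theta$, matching the definition of $p^{(e)}$ in the preliminaries. This establishes the claimed proportionality $r \propto p^{\lambda} q_\theta^{1-\lambda}$. There is no genuine obstacle here: the only thing to be careful about is that the partition constants of $p$ and $q_\theta$ appear in an $\alpha$-weighted form but are the same across all tokens $k$, so they indeed cancel and do not leave any residual token-dependent factor; once that is noted, the proof reduces to a one-line identification with the $e$-mixture definition.
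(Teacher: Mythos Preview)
Your proposal is correct and follows essentially the same approach as the paper's proof: both use the identity $e^{\ell_p(k)} = p(k)\,Z_p$ to convert the logit-space convex combination into a geometric mean of probabilities, then observe that the token-independent factor $Z_p^{\lambda}Z_q^{1-\lambda}$ cancels upon normalization. The only cosmetic difference is that you handle numerator and denominator together so the cancellation is immediate, whereas the paper first isolates the numerator and then separately verifies that the resulting normalizer equals $\sum_k p(k)^{\lambda}q_\theta(k)^{1-\lambda}$.
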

Please refer to Appendix \ref{proof: propTAID} for proof. Using the assistant distribution provides several advantages in KD for LLMs. \underline{First}, the assistant distribution facilitates more effective knowledge transfer between the teacher and the student. In KD, a significant capacity gap often arises due to differences in model size~\citep{mirzadeh2020improved}, and this issue becomes particularly pronounced in LLMs~\citep{zhang-etal-2023-lifting,sun2025warmup} due to the high-dimensional nature. This gap makes it difficult for the student to faithfully capture the knowledge encoded in the teacher~\citep{mirzadeh2020improved,shing2025taid}. By introducing the assistant distribution that serves as a bridge between the teacher and student, the information transfer might be more efficient~\citep{shing2025taid}. \underline{Second}, the assistant distribution improves training stability. Due to the high-dimensional nature of LLMs, most of probabilites in $p$ and $q_\theta$ are inevitably close to zero. These near-zero probabilities might cause instability in both the loss and the gradient computation when divergences involving density ratios (e.g., KL divergence) are used~\citep{ko2024distillm}. A suitably constructed assistant distribution yields more stable density-ratio estimates, thereby enhancing the robustness of optimization~\citep{ko2024distillm}. 

Despite these advantages, no systematic study has examined (1) the distinction between $m$- and $e$-mixture assistant distributions, (2) alternative candidates, (3) their compatibility with diverse divergences, and (4) their implications for KD in LLMs, supported by theoretical and empirical analyses. This gap hinders the development of general and effective methodologies, so the recent studies often fall into sub-optimal performances by relying on an isolated design of assistant distribution. In this paper, we alleviate this gap by unifying the existing assistant distributions into a generalized design principle of assistant distribution.
\subsection{$\alpha$-mixture assistant distribution} \label{sec: amad}
As discussed in Section~\ref{sec: methodology: motivation}, the existing assistant distributions can be formulated as the mixture of the teacher distribution and the student distribution via the mean function. To integrate the fragmentarily employed assistant distributions, we employ the generalized $f_\alpha$-mean~\citep{amari2016information} and introduce a new assistant distribution family, coined \textit{$\alpha$-mixture assistant distribution} as follows:
\begin{definition}[\textit{$\alpha$-mixture assistant distribution}]
    Let $\alpha\in\mathbb{R}$ and $\lambda\in[0,1]$. 
    For distributions $p$ and $q_\theta$ defined either on a discrete support $\mathcal{X}$ indexed by $k$ or on a continuous domain $\mathcal{X}$ with variable $x$, define the unnormalized $\alpha$-mixture assistant distribution as:
    \begin{align}
        \tilde r_{\theta}^{(\alpha,\lambda)}(z)=
        \begin{cases}
            \bigl(\,\lambda\, p(z)^{\frac{1-\alpha}{2}}+(1-\lambda)\, q_\theta(z)^{\frac{1-\alpha}{2}}\,\bigr)^{\frac{2}{1-\alpha}}, & \text{if }\alpha\neq 1,\\[6pt]
            p(z)^{\lambda}\, q_\theta(z)^{1-\lambda}, & \text{if }\alpha=1,
        \end{cases}        
    \end{align}
    where $z=k$ in the discrete case and $z=x$ in the continuous case.

    Consequently, the (normalized) $\alpha$-mixture assistant distribution is defined as:
    \begin{align}
        r_{\theta}^{(\alpha,\lambda)}(z) =\frac{\tilde r_{\theta}^{(\alpha,\lambda)}(z)}{Z_r}, \qquad Z_r \coloneq \sum_{k}\tilde r_{\theta}^{(\alpha,\lambda)}(k) \,\,\, \text{or} \,\,\,\int_{\mathcal{X}} \tilde r_{\theta}^{(\alpha,\lambda)}(x)\, dx
        \label{eq: def r}
    \end{align}
    \vspace{-4ex}
\end{definition}

The $\alpha$-mixture assistant distribution $r_\theta^{(\alpha,\lambda)}$ contains two tunable parameters: $\alpha$ and $\lambda$. The $\lambda$ determines the portion of the interpolation between teacher $p$ and student model $q_\theta$, which has been fine-tuned in previous works~\citep{agarwal2024policy,ko2024distillm,shing2025taid}. The other parameter $\alpha$ is a new axis of distribution design variable, which was only employed as a specialized case ($\alpha=\pm1$), controls the geometry of the interpolation path, as depicted in Figure~\ref{fig: role of a and r}. Since the form of the generalized $f_\alpha$-mean is solely governed by $\alpha$, once $\alpha$ is fixed, $\lambda$ only serves to control the portion between $p$ and $q_\theta$ along that determined path. In addition, Theorem~\ref{them: interpolation} provides an helpful information geometric perspective of the $\alpha$-mixture (assistant) distribution:
\begin{restatable}{theorem}{thmInter}
    \citep{amari2007integration} Given a fixed $\alpha$ and $\lambda$, the $r^{(\alpha,\lambda)}$ defined as~\cref{eq: def r} is unique minizer of a weighted sum of Amari's $\alpha$-dviergences $D_\alpha$ (see Appendix~\ref{app: amari alpha divergence} for the definition):
    \begin{align}
        r^{(\alpha,\lambda)} = \arg\min_{r} \lambda \cdot D_{\alpha}(p \| r) + (1-\lambda) \cdot D_{\alpha}(q \| r)
    \end{align}
\label{them: interpolation}
\vspace{-4ex}
\end{restatable}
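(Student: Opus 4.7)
The plan is to identify $r^{(\alpha,\lambda)}$ as the Karush--Kuhn--Tucker point of the constrained optimization $\min_r \lambda D_\alpha(p\|r) + (1-\lambda) D_\alpha(q_\theta\|r)$ subject to $\int r = 1$, and then promote it to the unique global minimizer via strict convexity. Since the result is classical and due to Amari, the derivation is standard, but care is needed to match the sign convention on $\alpha$ to the $f_\alpha$ of \cref{eq: f alpha}.

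First I would fix the convention
\[
D_\alpha(p\|r) = \frac{4}{1-\alpha^2}\Bigl(1 - \int p(z)^{(1-\alpha)/2}\, r(z)^{(1+\alpha)/2}\,dz\Bigr),\qquad \alpha\neq\pm 1,
\]
and verify that the limits $\alpha\to -1$ and $\alpha\to 1$ recover $D_{\mathrm{KL}}(p\|r)$ and $D_{\mathrm{KL}}(r\|p)$ respectively (a brief L'H\^opital computation on the $4/(1-\alpha^2)$ prefactor). This choice is precisely the one aligned with the mean function $f_\alpha(u)=u^{(1-\alpha)/2}$ used in \cref{eq: f alpha}. With this convention, linearity in the integrand collapses the weighted sum into
\[
\mathcal{L}(r) = \frac{4}{1-\alpha^2}\Bigl(1 - \int h(z)\, r(z)^{(1+\alpha)/2}\,dz\Bigr),\quad h(z)\coloneq \lambda p(z)^{(1-\alpha)/2} + (1-\lambda) q_\theta(z)^{(1-\alpha)/2},
\]
so that $r$ now appears only through a single power integral against the mean-function numerator $h$.

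Next I would impose $\int r = 1$ with a multiplier $\mu$ and set the functional derivative of $\mathcal{L}+\mu(\int r-1)$ to zero, which gives the pointwise condition $h(z)\, r(z)^{(\alpha-1)/2} = \text{const}$. Inverting the power yields $r(z)\propto h(z)^{2/(1-\alpha)}$, which matches $\tilde r^{(\alpha,\lambda)}/Z_r$ in \cref{eq: def r} exactly. The boundary case $\alpha=-1$ is absorbed for free, since then $2/(1-\alpha)=1$ and $h$ already integrates to one, recovering the $m$-mixture with no extra work. The remaining boundary $\alpha=1$ I would treat as a short separate calculation: $D_1(p\|r)=D_{\mathrm{KL}}(r\|p)$ makes the objective $\int r\log r - \int r\log(p^\lambda q_\theta^{1-\lambda}) + \text{const}$, whose Lagrangian directly gives $r\propto p^\lambda q_\theta^{1-\lambda}$, matching the $\alpha=1$ branch of \cref{eq: def r}.

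For uniqueness I would argue strict convexity of $\mathcal{L}$ in $r$ on the probability simplex by cases: for $|\alpha|<1$ the exponent $(1+\alpha)/2\in(0,1)$ makes $r\mapsto r^{(1+\alpha)/2}$ strictly concave and the prefactor $4/(1-\alpha^2)$ is positive, so $\mathcal{L}$ is strictly convex; for $|\alpha|>1$ the exponent is outside $[0,1]$ making the power strictly convex and the prefactor negative, so $\mathcal{L}$ is again strictly convex. The KL cases at $\alpha=\pm 1$ are strictly convex directly. Combined with the affine constraint, the KKT point is the unique minimizer. The main obstacle I anticipate is not the algebra but the convention bookkeeping: several references swap the sign of $\alpha$ or scale $D_\alpha$ differently, and a mismatched convention would yield the dual mixture (swapping the $m$- and $e$-mixture branches) and break agreement with \cref{eq: def r}; once the convention is pinned to $f_\alpha(u)=u^{(1-\alpha)/2}$, everything reduces to an elementary Lagrange multiplier computation with a careful limit at $\alpha=1$.
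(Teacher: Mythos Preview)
The paper does not actually prove this theorem: it is stated with an attribution to \citet{amari2007integration} and no derivation appears anywhere in the main text or the appendix (the appendix contains proofs only for Propositions~\ref{prop: taid}, \ref{prop: continuity}, \ref{prop: grad} and Theorem~\ref{them: optimality}). Your Lagrangian argument with the convention $D_\alpha(p\|r)=\tfrac{4}{1-\alpha^2}\bigl(1-\int p^{(1-\alpha)/2}r^{(1+\alpha)/2}\bigr)$, the collapse to a single integral against $h=\lambda p^{(1-\alpha)/2}+(1-\lambda)q_\theta^{(1-\alpha)/2}$, the stationarity condition $r\propto h^{2/(1-\alpha)}$, the separate treatment of $\alpha=1$ via $D_{\mathrm{KL}}(r\|p)$, and the case-split strict-convexity argument for uniqueness are all correct and constitute exactly the classical derivation one finds in Amari's work; the sign convention you chose is also the one consistent with the paper's remark that $\alpha=-1$ recovers the $m$-mixture minimizing weighted $D_{\mathrm{KL}}$ and $\alpha=1$ recovers the $e$-mixture minimizing weighted $D_{\mathrm{RKL}}$.
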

Theorem~\ref{them: interpolation} indicates that $r_\theta^{(\alpha,\lambda)}$ is the internal division distribution of $p$ and $q_\theta$ in terms of $\alpha$-divergence, which bridges the generalization of the mean concept and the geodesic in information geometry. Due to the generalized $f_\alpha$-mean, the existing assistant distributions are recoverable as special instances of $r_\theta^{(\alpha,\lambda)}$: $r_{\theta}^{(-1,\lambda)}$ is $m$-mixture~\citep{agarwal2024policy,ko2024distillm} that is minimizer of a weighted sum of $D_{\text{KL}}$, and $r_{\theta}^{(1,\lambda)}$ is $e$-mixture~\citep{shing2025taid} that is minimizer of a weighted sum of $D_{\text{RKL}}$. Furthermore, $r_{\theta}^{(\alpha,\lambda)}$ provides several new assistant distributions that were not previously used in KD literature, as depicted in Figure~\ref{fig: alpha family}.

Moreover, the support of $\alpha$-mixture assistant distribution determined by the range of $\alpha$: $\text{supp}(r_\theta^{(\alpha,\lambda)}) = \text{supp}(p)\, \cup \, \text{supp}(q_\theta)$ when $\alpha < 1$, and $\text{supp}(r_\theta^{(\alpha,\lambda)}) = \text{supp}(p) \, \cap \, \text{supp}(q)$ when $\alpha \geq 1$. This property demonstrates the necessity of determining the range of $\alpha$ based on the characteristics at the intersection of $p$ and $q_\theta$. For instance, if $p$ and $q_\theta$ overlap significantly, setting $\alpha \geq 1$ can strengthen the matching within the intersection region. Conversely, if they overlap minimally, setting $\alpha < 1$ indicates that matching occurs across a broader range. Although in KD for LLMs, $p$ and $q_\theta$ typically share the same support defined by the vocabulary set, this property remains useful because many probabilities are very small values close to zero due to the high dimensionality. Figures \ref{fig: a=-3}-\ref{fig: a=3} shows the different behaviors of $r_{\theta}^{(\alpha,\lambda)}$ among the various $\alpha$ values. 

Lastly, we also demonstrate that $r_{\theta}^{(\alpha,\lambda)}$ is a continuous function with respect to $\alpha$ in Proposition~\ref{prop: continuity}, even though the $r_\theta^{(\alpha,\lambda)}$ is a piecewise-defined function. This property enables the design of a curriculum-based adaptive $\alpha$ scheduling, paralleling prior work~\citep{shing2025taid,ko2025distillm} that investigated adaptive strategies for $\lambda$. Please refer to Appendix \ref{proof: propConti} for proof.
\begin{restatable}{proposition}{propConti}
    (Continuity) Assume that $p$ and $q_{\theta}$ are not both zero. Then, $r_{\theta}^{(\alpha,\lambda)}$ is continuous function w.r.t $\alpha$ under the fixed $\lambda \in [0,1]$.
\label{prop: continuity}
\end{restatable}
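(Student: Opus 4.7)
The plan is to verify continuity of the unnormalized $\tilde r_\theta^{(\alpha,\lambda)}(z)$ in $\alpha$ pointwise in $z$, then lift this to continuity of the normalized $r_\theta^{(\alpha,\lambda)}$ through continuity of the normalizer $Z_r(\alpha)$. Outside the piecewise boundary $\alpha=1$, fixing $a \coloneq p(z)$ and $b \coloneq q_\theta(z)$, the map $\alpha \mapsto \bigl(\lambda a^{(1-\alpha)/2} + (1-\lambda) b^{(1-\alpha)/2}\bigr)^{2/(1-\alpha)}$ is continuous as a composition of continuous operations. At endpoints where $a$ or $b$ vanishes I would use the extended-real conventions $0^{\text{neg}} = +\infty$ and $\infty^{\text{neg}} = 0$; one may check these are consistent with the support characterization $\mathrm{supp}(r_\theta^{(\alpha,\lambda)}) = \mathrm{supp}(p) \cap \mathrm{supp}(q_\theta)$ for $\alpha \geq 1$ and $\mathrm{supp}(p) \cup \mathrm{supp}(q_\theta)$ for $\alpha < 1$.

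The crux is continuity at the boundary $\alpha = 1$. Here I would invoke the classical convergence of weighted power means to the weighted geometric mean: with $s \coloneq (1-\alpha)/2 \to 0$ and $a,b>0$, taking logs reduces the task to
\begin{align*}
\lim_{s\to 0}\frac{\log\bigl(\lambda a^{s}+(1-\lambda)b^{s}\bigr)}{s}=\lambda\log a+(1-\lambda)\log b,
\end{align*}
which follows from L'Hopital (or the first-order expansion $x^{s}=1+s\log x+o(s)$). Exponentiating yields $\tilde r_\theta^{(\alpha,\lambda)}(z) \to a^\lambda b^{1-\lambda}$, exactly the $\alpha=1$ branch. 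The degenerate inputs require one-sided checks: for $\alpha \to 1^-$ with $a=0$ and $\lambda\in(0,1)$, the prefactor $(1-\lambda)^{2/(1-\alpha)}$ drives $\tilde r\to 0=a^\lambda b^{1-\lambda}$; for $\alpha \to 1^+$ with $a=0$, the $0^{\text{neg}}/\infty^{\text{neg}}$ cascade gives $\tilde r \equiv 0$; the boundary weights $\lambda\in\{0,1\}$ collapse the formula to a single factor and are trivially continuous.

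To promote pointwise continuity of $\tilde r_\theta^{(\alpha,\lambda)}(z)$ to continuity of $r_\theta^{(\alpha,\lambda)}(z) = \tilde r_\theta^{(\alpha,\lambda)}(z)/Z_r(\alpha)$, I would show $Z_r(\alpha)$ is continuous and positive. In the discrete vocabulary setting relevant to LLMs, $Z_r$ is a finite sum of pointwise-continuous terms and hence continuous; in the continuous setting, dominated convergence on a compact $\alpha$-neighborhood (with a uniform integrable envelope obtained by bounding $a^{(1-\alpha)/2} + b^{(1-\alpha)/2}$ on that neighborhood) supplies the same conclusion. Positivity of $Z_r(\alpha)$ is guaranteed by the hypothesis that $p$ and $q_\theta$ are not both zero. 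The ratio of a continuous function and a continuous, nonvanishing positive function is continuous, which concludes the proof.

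The hard part will be the joint limit at $\alpha=1$ in the degenerate subcases where $p(z)$ or $q_\theta(z)$ vanishes: the three regimes $\alpha<1$, $\alpha=1$, and $\alpha>1$ enforce the correct value through different mechanisms (decay of $(1-\lambda)^{2/(1-\alpha)}$, the explicit product formula, and the $0^{\text{neg}}/\infty^{\text{neg}}$ cascade, respectively), so the gluing requires a careful case split on whether $a$ and $b$ vanish and on $\lambda\in\{0\},(0,1),\{1\}$ rather than a single unified argument. Once these cases are tabulated, continuity follows uniformly.
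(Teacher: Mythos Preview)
Your proposal is correct and follows essentially the same approach as the paper: establish pointwise continuity of the unnormalized $\tilde r_\theta^{(\alpha,\lambda)}$ (trivially for $\alpha\neq 1$, via the power-mean-to-geometric-mean limit computed by L'H\^opital at $\alpha=1$), then pass to $r_\theta^{(\alpha,\lambda)}$ using continuity and positivity of the finite-sum normalizer $Z_r$. Your treatment is somewhat more thorough than the paper's, which applies L'H\^opital directly without separately tabulating the degenerate cases $p(z)=0$ or $q_\theta(z)=0$ and does not discuss the continuous-domain/dominated-convergence variant; these additions are sound but do not change the underlying argument.
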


\begin{figure}[t]
    \vspace{-7ex}
    \centering
    \begin{subfigure}{0.4\textwidth}
        \centering
        \centerline{\includegraphics[width=\textwidth]{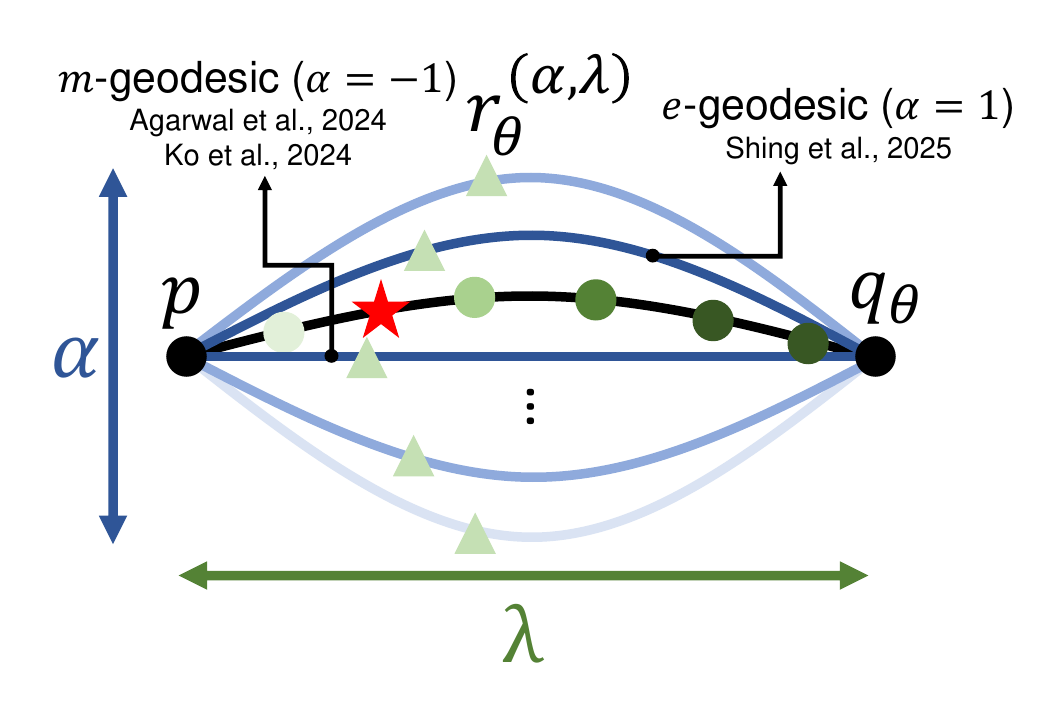}}
        \caption{Role of $\alpha$ and $\lambda$ in the distribution space.}
        \label{fig: role of a and r}
    \end{subfigure}
    \hfill
    \begin{subfigure}{0.25\textwidth}
        \centering
        \centerline{\includegraphics[width=\textwidth]{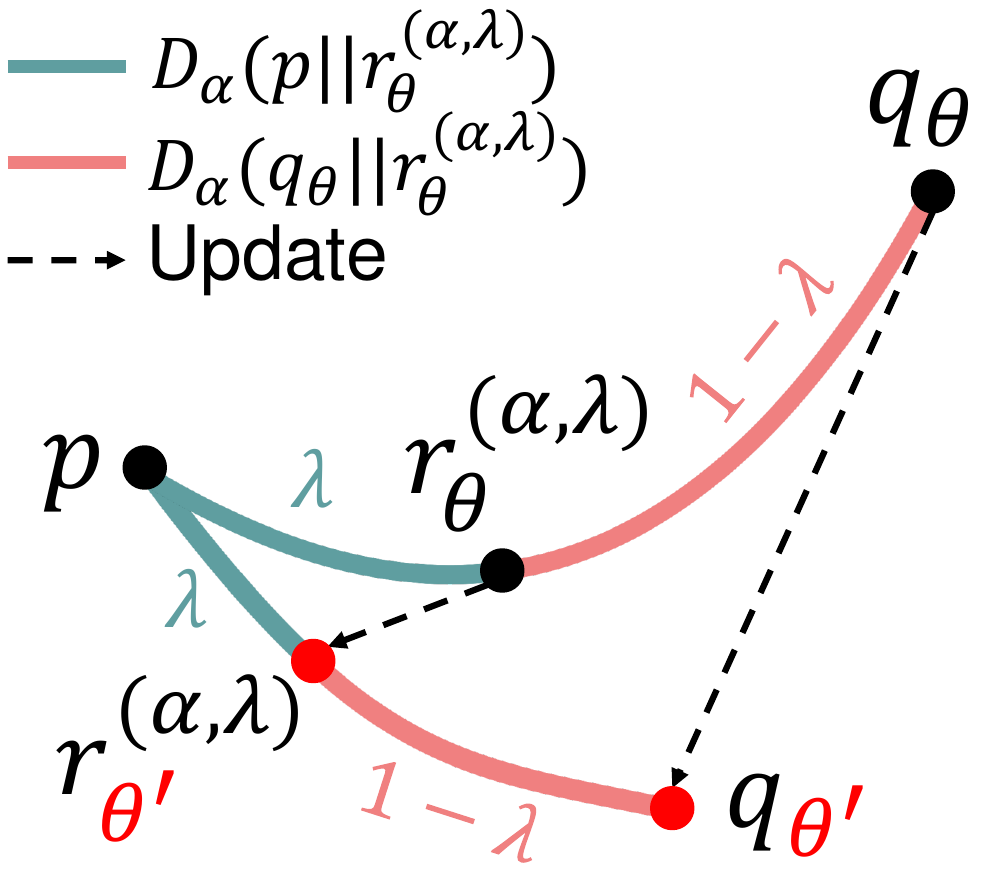}}
        \caption{Optimization dynamics.} 
        \label{fig: opt dynamics}
    \end{subfigure}
    \hfill
    \begin{subfigure}{0.33\textwidth}
        \centering
        \centerline{\includegraphics[width=\textwidth]{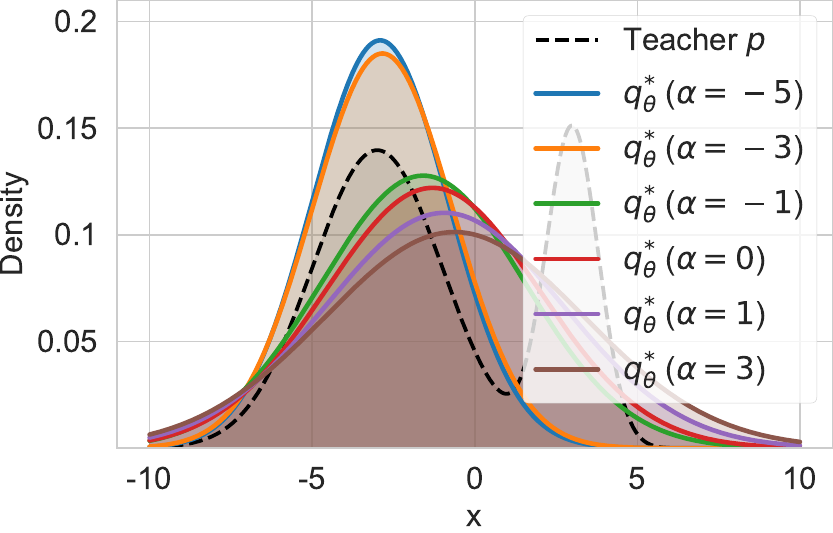}}
        \caption{Role of $\alpha$ for $q_\theta^*$.}
        \label{fig: toy exp}
    \end{subfigure}
    \caption{Visualization of the characteristics of the $\alpha$-mixture distillation, AMiD. 
    (a) $\alpha$ determines the geometry of interpolation while $\lambda$ controls the portion between $p$ and $q_\theta$. In general, the curvature of the path increases as $\alpha$ moves farther away from $-1$ (the straight line).
    (b) $r_\theta^{(\alpha,\lambda)}$ can be interpreted as the internal division point in terms of $\alpha$-divergence. Due to the uniqueness, the updated parameter $\theta'$ via optimization of AMiD, directly affects the student distribution.
    (c) Toy experiment with two-modal $p$ and uni-modal $q_\theta$. $\alpha$ controls the property of the optimized student distribution $q_\theta^*$ between the mode-covering and mode-seeking, even though we minimize the fixed divergence $D_{\text{KL}}(p\|r_\theta^{(\alpha,\lambda)})$. In particular, when $\alpha \leq 1$, increasing $\alpha$ encourages $q_\theta^{*}$ to exhibit more mode-covering behavior, whereas using smaller $\alpha$ strengthens mode-seeking behavior.}
    \vspace{-2ex}
\end{figure}
\subsection{AMiD: Knowledge distillation with $\alpha$-mixture assistant distribution} \label{sec: amid}
In this section, we present a token-level KD for LLMs with $\alpha$-mixture assistant distribution, coined as \underline{$\alpha$}-\underline{mi}xture \underline{d}istillation (AMiD), which aims to align $r_\theta^{(\alpha,\lambda)}$ and either $p$ or $q_\theta$.
Specifically, the optimization of AMiD is defined as follows, similar to~\cref{eq: objective of KD}:
\begin{equation}
    \min_{\theta} \mathbb{E}_{(x,y) \sim \mathcal{D}} \Bigg[ \sum_{l=1}^{L} D(p,r_{\theta}^{(\alpha,\lambda)}) \Bigg]
    \quad\text{or}\quad
    \min_{\theta} \mathbb{E}_{(x,y) \sim \mathcal{D}} \Bigg[ \sum_{l=1}^{L} D(q_\theta,r_{\theta}^{(\alpha,\lambda)}) \Bigg]
\label{eq: token-level KD with alpha}
\end{equation}
We highlight that AMiD allows the use of arbitrary divergence $D$ and any dataset $\mathcal{D}$ (see Section~\ref{sec: preliminary: kd}) since $r_\theta^{(\alpha, \lambda)}$ is a valid distribution. Furthermore, AMiD generalizes the optimization schemes of prior research by extending both the assistant distribution and the divergence. For example, DistiLLM~\citep{ko2024distillm} corresponds to $D_{\text{KL}}(p \| r_\theta^{(-1,\lambda)})$ and $D_{\text{KL}}(q_\theta \| r_\theta^{(-1,\lambda)})$; and TAID~\citep{shing2025taid} corresponds to $D_{KL}(r_\theta^{(1,\lambda)} \| q_\theta)$. Next, we aim to characterize the optimality of AMiD.
\begin{restatable}{theorem}{thmOpti}
    (Optimality) Let $D$ be any proper divergence and $\alpha \in \mathbb{R}$.
    \begin{itemize}
        \item If $\lambda \in [0,1)$ and $\exists\theta$ s.t. $D(p, r_{\theta}^{(\alpha,\lambda)})=0$, then $D(p,r_{\theta}^{(\alpha,\lambda)})=0$ if and only if $p=q_{\theta}$.
        \item If $\lambda \in (0,1]$ and $\exists\theta$ s.t. $D(q_\theta, r_{\theta}^{(\alpha,\lambda)})=0$, then $D(q_\theta,r_{\theta}^{(\alpha,\lambda)})=0$ if and only if $p=q_{\theta}$.
    \end{itemize}
\label{them: optimality}   
\vspace{-1ex}
\end{restatable}
Please refer to Appendix~\ref{proof: thmOpti} for the proof. Theorem~\ref{them: optimality} demonstrates that even if we minimize the divergence between $p$ (or $q_\theta$) and $r_\theta^{(\alpha,\lambda)}$, the primary goal of KD is guaranteed i.e., $p = q_{\theta}$. It is intuitive because the interpolation point needs to coincide with one of the endpoints when it coincides with the other (see Figure~\ref{fig: opt dynamics}). Therefore, leveraging the benefits of the assistant distribution, we establish optimality. Although Theorem~\ref{them: optimality} establishes theoretical optimality for any choice of $D$, $\alpha \in \mathbb{R}$, and $\lambda \in (0,1)$, the effectiveness of AMiD might depend on selecting an appropriate $\alpha$ value due to the imperfect practical optimization. Please refer to Appendix~\ref{app: implementation} for the theoretical insights based $\alpha$ tuning guidelines, Appendix~\ref{sec: adaptive alpha} for overlap-based adaptive $\alpha$ scheduling.

Now, we provide the gradient analysis to investigate the specific role of $\alpha$. In particular, we consider $f$-divergence, which is widely used in many areas, including KD for LLMs.
\begin{restatable}{proposition}{propGrad}
    (Gradient analysis) The gradient of $f$-divergence $D_f(p||r_\theta^{(\alpha,\lambda)})$ be expressed as:
    \begin{align}
        \nabla_{\theta} D_{f}\!\left(p||r_{\theta}^{(\alpha,\lambda)}\right) = \mathbb{E}_{r_\theta^{(\alpha,\lambda)}} \!\left[ w \cdot \Bigg\{\psi_f\!\left(\frac{p}{r_\theta^{(\alpha,\lambda)}}\right) - \mathbb{E}_{r_\theta^{(\alpha,\lambda)}}\!\left[\psi_f\bigg(\frac{p}{r_\theta^{(\alpha,\lambda)}}\bigg)\right]\Bigg\} \cdot \nabla_\theta \log{q_\theta} \right]
    \end{align}
    where $w \coloneq \frac{(1-\lambda){q_{\theta}}^{\frac{1-\alpha}{2}}}{\lambda p^{\frac{1-\alpha}{2}}+(1-\lambda){q_{\theta}}^{\frac{1-\alpha}{2}}}$ and $\psi_f(v) \coloneq f(v)-vf'(v)$.
\label{prop: grad}
\end{restatable}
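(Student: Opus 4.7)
The plan is to differentiate $D_f(p\|r_\theta^{(\alpha,\lambda)}) = \sum_k r_\theta^{(\alpha,\lambda)}(k)\, f\!\left(p(k)/r_\theta^{(\alpha,\lambda)}(k)\right)$ directly, using that $p$ does not depend on $\theta$. Applying the product and chain rules to each summand, the two contributions combine as $f(p/r) - (p/r)f'(p/r) = \psi_f(p/r)$, giving $\nabla_\theta D_f = \sum_k \nabla_\theta r_\theta^{(\alpha,\lambda)}(k)\,\psi_f\!\left(p(k)/r_\theta^{(\alpha,\lambda)}(k)\right)$. Everything then reduces to computing $\nabla_\theta r_\theta^{(\alpha,\lambda)}$ and recasting the sum as an expectation under $r_\theta^{(\alpha,\lambda)}$.

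Next I would handle the unnormalized form first. For $\alpha\neq 1$, writing $A(k)=\lambda p(k)^{(1-\alpha)/2}+(1-\lambda)q_\theta(k)^{(1-\alpha)/2}$ so that $\tilde r_\theta^{(\alpha,\lambda)}(k)=A(k)^{2/(1-\alpha)}$, a chain rule computation cancels the factors $\tfrac{2}{1-\alpha}$ and $\tfrac{1-\alpha}{2}$ and produces
\[
\nabla_\theta \tilde r_\theta^{(\alpha,\lambda)}(k) \;=\; \tilde r_\theta^{(\alpha,\lambda)}(k)\, w(k)\,\nabla_\theta \log q_\theta(k),
\]
where $w(k)=(1-\lambda)q_\theta(k)^{(1-\alpha)/2}/A(k)$ is exactly the weight in the statement. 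The $\alpha=1$ boundary can be obtained by directly differentiating $\log \tilde r_\theta^{(1,\lambda)} = \lambda\log p + (1-\lambda)\log q_\theta$, or by invoking Proposition~\ref{prop: continuity}.

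Passing to the normalized $r_\theta^{(\alpha,\lambda)} = \tilde r_\theta^{(\alpha,\lambda)}/Z$ with $Z=\sum_k \tilde r_\theta^{(\alpha,\lambda)}(k)$, the quotient rule together with $\nabla_\theta Z = \sum_k \tilde r_\theta^{(\alpha,\lambda)}(k)\,w(k)\,\nabla_\theta \log q_\theta(k)$ yields
\[
\nabla_\theta r_\theta^{(\alpha,\lambda)}(k) \;=\; r_\theta^{(\alpha,\lambda)}(k)\Big(w(k)\,\nabla_\theta \log q_\theta(k) \;-\; \mathbb{E}_{r_\theta^{(\alpha,\lambda)}}\!\left[w\,\nabla_\theta \log q_\theta\right]\Big).
\]
Substituting into the Step~1 expression turns $\sum_k$ into an expectation under $r_\theta^{(\alpha,\lambda)}$, and since $\mathbb{E}_{r_\theta^{(\alpha,\lambda)}}[\psi_f(p/r_\theta^{(\alpha,\lambda)})]$ is a scalar it can be absorbed inside the expectation to give the centered form $\mathbb{E}_{r_\theta^{(\alpha,\lambda)}}\!\left[w\cdot\{\psi_f(p/r_\theta^{(\alpha,\lambda)})-\mathbb{E}_{r_\theta^{(\alpha,\lambda)}}[\psi_f(p/r_\theta^{(\alpha,\lambda)})]\}\cdot\nabla_\theta \log q_\theta\right]$ claimed in the proposition.

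The main obstacle I expect is bookkeeping around the normalization constant: it is tempting to treat $Z$ as $\theta$-independent, but the cross term $-r\,\nabla_\theta Z/Z$ is precisely what produces the mean-subtracted factor $(\psi_f-\mathbb{E}_r[\psi_f])$ in the final expression. A secondary delicate step is tracking the exponents in the $\alpha\neq 1$ chain rule, especially the identity $A^{(1+\alpha)/(1-\alpha)} = (\tilde r_\theta^{(\alpha,\lambda)})^{(1+\alpha)/2}$ that makes the weight $w$ emerge cleanly, with the $\alpha=1$ case then either verified directly or handled by the continuity result.
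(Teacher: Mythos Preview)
Your proposal is correct and follows essentially the same route as the paper's proof: both first reduce $\nabla_\theta D_f$ to $\sum_k \psi_f(p/r)\,\nabla_\theta r$, then compute $\nabla_\theta \tilde r_\theta^{(\alpha,\lambda)} = w\cdot \tilde r_\theta^{(\alpha,\lambda)}\cdot\nabla_\theta\log q_\theta$ via the chain rule, pass to the normalized distribution to obtain $\nabla_\theta\log r_\theta^{(\alpha,\lambda)} = w\,\nabla_\theta\log q_\theta - \mathbb{E}_{r}[w\,\nabla_\theta\log q_\theta]$, and finally rearrange the two expectations into the mean-centered form. The only cosmetic difference is that the paper writes the chain rule through the abstract $f_\alpha$ map (denoted $h_\alpha$ there) rather than expanding $A(k)^{2/(1-\alpha)}$ explicitly, and it does not separately treat the $\alpha=1$ case.
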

Please refer to Appendix~\ref{proof: propGrad} for the proof. Proposition~\ref{prop: grad} implies the following properties. \underline{First}, $w$ controls the magnitude of the instance-wise gradient $\nabla_\theta \log{q_\theta(y_l \mid x,y_{<l})}$. While $w$ does not affect the individual gradient direction due to $0 \leq w \leq 1$, its weighting effect may shift the batch-wise gradient direction. \underline{Second}, the $\alpha$ plays a crucial role in enabling $w$ to perform instance-wise weighting based on the density ratio $\tfrac{p(y_l \mid x,y_{<l})}{q_{\theta}(y_l \mid x,y_{<l})}$. This property originates from the unique characteristic of the $\alpha$-mixture assistant distribution that cannot be achieved by $\lambda$ or learning rate scheduling. \underline{Third}, $\alpha$ (relatively) adjusts the \textit{mode-covering} and \textit{mode-seeking} behavior of the optimized student distribution. Let us assume $\tfrac{1-\alpha}{2} \geq 0$, i.e., $\alpha \leq 1$.\footnote{Although $\alpha$ can be any real number, we assume $\alpha \leq 1$ for the sake of simplicity.}
When $p \geq q_\theta$, a larger $\alpha$ produces a correspondingly larger value of $w$. Thus, it amplifies the gradient magnitude in regions where the student underestimates the teacher. As a result, choosing a large $\alpha$ (relatively) encourages the student distribution $q_\theta$ to exhibit a \textit{mode-covering} behavior. 
In contrast, employing the small $\alpha$ in $p < q_\theta$ results in a large $w$ value. It assigns a large gradient magnitude to the area where the student overestimates, ultimately exhibiting that the small $\alpha$ (relatively) reinforces \textit{mode-seeking} property.
\begin{table*}[t]
\vspace{-6ex}
    \centering
    \caption{ROUGE-L scores ($\uparrow$) on five task-agnostic instruction-following datasets. \textbf{Bold} and \underline{Underline} mean the best and second-best performance of each column, except the teacher, respectively. All results are based on our own re-implementation. We use $D_{AB}$ and $\lambda=0.1$ for AMiD. We conduct the evaluation with five random seeds. More results of baselines are in Appendix~\ref{app: exp_additional_more}.}
    \vspace{-1ex}
    \label{tab: gpt2}
    \resizebox{\textwidth}{!}{%
    \begin{tabular}{l |c |c cccc|c}
    \toprule\toprule
    Model &  Val. ($\uparrow$) 
      & Dolly Eval ($\uparrow$) 
      & Self Inst ($\uparrow$)
      & Vicuna ($\uparrow$)
      & Super NI ($\uparrow$)
      & UnNI ($\uparrow$)
      & Avg. ($\uparrow$) \\
    \midrule
     GPT-2 XL (Teacher) &  $-$ & 27.14 \stdpm{0.15} & 14.55 \stdpm{0.82} & 
     16.12 \stdpm{0.31} & 27.21 \stdpm{0.25} & 31.41 \stdpm{0.06} & 23.29 \\
    \midrule

    \multicolumn{8}{l}{\textbf{\textit{GPT-2 XL (1.5B) $\rightarrow$ GPT-2 (0.1B)}}} \\
    \midrule
    GKD       & 27.06 & 24.58 \stdpm{0.13} & 11.78 \stdpm{0.44} & 14.60 \stdpm{0.37} & 22.84 \stdpm{0.12} & 25.04 \stdpm{0.09} & 19.77 \\
    TAID      & 28.37 & \underline{25.74} \stdpm{0.27} & \underline{12.91} \stdpm{0.31} & \underline{17.09} \stdpm{0.18} & 23.66 \stdpm{0.31} & 26.82 \stdpm{0.05} & 21.24 \\
    DistiLLM (SKL) & 27.88 & 25.50 \stdpm{0.28} & 12.35 \stdpm{0.39} & 16.10 \stdpm{0.22} & 23.87 \stdpm{0.39} & 26.16 \stdpm{0.06} & 20.80 \\
    DistiLLM (SRKL) & 28.21 & \underline{25.74} \stdpm{0.20} & 12.13 \stdpm{0.23} & 16.34 \stdpm{0.15} & 25.40 \stdpm{0.10} & 26.91 \stdpm{0.12} & 21.30 \\
    ABKD      & \underline{28.61} & 25.49 \stdpm{0.24} & 12.52 \stdpm{0.52} & \textbf{17.36} \stdpm{0.55} & \underline{26.07} \stdpm{0.14} & \underline{27.36} \stdpm{0.10} & \underline{21.76} \\
     \textbf{AMiD (Ours)} & \textbf{29.24} &\textbf{26.44} \stdpm{0.12} & \textbf{13.74} \stdpm{0.49} & 16.76 \stdpm{0.24} & \textbf{29.71} \stdpm{0.08} & \textbf{30.35} \stdpm{0.09} & \textbf{23.40} \\

    \midrule
    \multicolumn{8}{l}{\textbf{\textit{GPT-2 XL (1.5B) $\rightarrow$ GPT-2 Medium (0.3B)}}}  \\
    \midrule
    GKD       & 27.90 & 25.06 \stdpm{0.55} & 12.36 \stdpm{0.42} & 15.71 \stdpm{0.58} & 23.83 \stdpm{0.26} & 27.14 \stdpm{0.09} & 20.82 \\
       TAID      & 29.45 & \underline{27.01} \stdpm{0.27} & \underline{14.53} \stdpm{0.47} & \underline{17.58} \stdpm{0.20} & 25.14 \stdpm{0.15} & 29.79 \stdpm{0.14} & 22.81 \\
       DistiLLM (SKL) & 29.65 & 26.87 \stdpm{0.13} & 14.11 \stdpm{0.29} & 16.85 \stdpm{0.54} & 25.59 \stdpm{0.22} & 28.84 \stdpm{0.03} & 22.45 \\
       DistiLLM (SRKL) & \underline{29.72} & 26.50 \stdpm{0.20} & 13.79 \stdpm{0.71} & 17.14 \stdpm{0.52} & 26.25 \stdpm{0.11} & 29.31 \stdpm{0.16} & 22.60 \\
       ABKD      & 29.64 & 26.93 \stdpm{0.17} & 13.69 \stdpm{0.32} & 17.45 \stdpm{0.27} & \underline{28.15} \stdpm{0.18} & \underline{30.94} \stdpm{0.06} & \underline{23.43} \\
       \textbf{AMiD (Ours)} & \textbf{30.83} & \textbf{27.34} \stdpm{0.18} & \textbf{15.26} \stdpm{0.46} & \textbf{17.69} \stdpm{0.27} & \textbf{29.04} \stdpm{0.20} & \textbf{33.15} \stdpm{0.13} & \textbf{24.50} \\

    \midrule
    \multicolumn{8}{l}{\textbf{\textit{GPT-2 XL (1.5B) $\rightarrow$ GPT-2 Large (0.8B)}}}  \\
    \midrule
    GKD       & 29.36 & 26.38 \stdpm{0.24} & 14.44 \stdpm{0.66} & \underline{17.02} \stdpm{0.46} & 26.64 \stdpm{0.16} & 30.99 \stdpm{0.13} & 23.09 \\
       TAID    & 29.83 & 26.85 \stdpm{0.32} & 15.07 \stdpm{0.31} & \underline{17.02} \stdpm{0.48} & 26.71 \stdpm{0.23} & 31.09 \stdpm{0.17} & 23.35 \\
       DistiLLM (SKL)       & 29.69 & 26.12 \stdpm{0.27} & \underline{15.69} \stdpm{0.75} & 16.91 \stdpm{0.43} & 27.23 \stdpm{0.18} & 30.73 \stdpm{0.12} & 23.34 \\
       DistiLLM (SRKL)   & \underline{30.59} & 27.09 \stdpm{0.40} & 14.61 \stdpm{0.66} & 16.39 \stdpm{0.27} & 28.44 \stdpm{0.45} & 31.04 \stdpm{0.06} & 23.51 \\      
       ABKD       & 30.49 & \underline{27.67} \stdpm{0.34} & 15.46 \stdpm{0.81} & \textbf{17.43} \stdpm{0.25} & \underline{30.74} \stdpm{0.22} & \underline{33.11} \stdpm{0.15} & \underline{24.88} \\
     \textbf{AMiD (Ours)} & \textbf{31.10} & \textbf{27.86} \stdpm{0.29} & \textbf{16.46} \stdpm{0.41} & 16.62 \stdpm{0.50} & \textbf{32.64} \stdpm{0.26} & \textbf{35.64} \stdpm{0.07} & \textbf{25.84} \\
    \bottomrule\bottomrule
    \end{tabular}}
\end{table*}
\begin{table*}
    \centering
    \parbox{0.23\textwidth}{
        \centering
        \includegraphics[width=\linewidth]{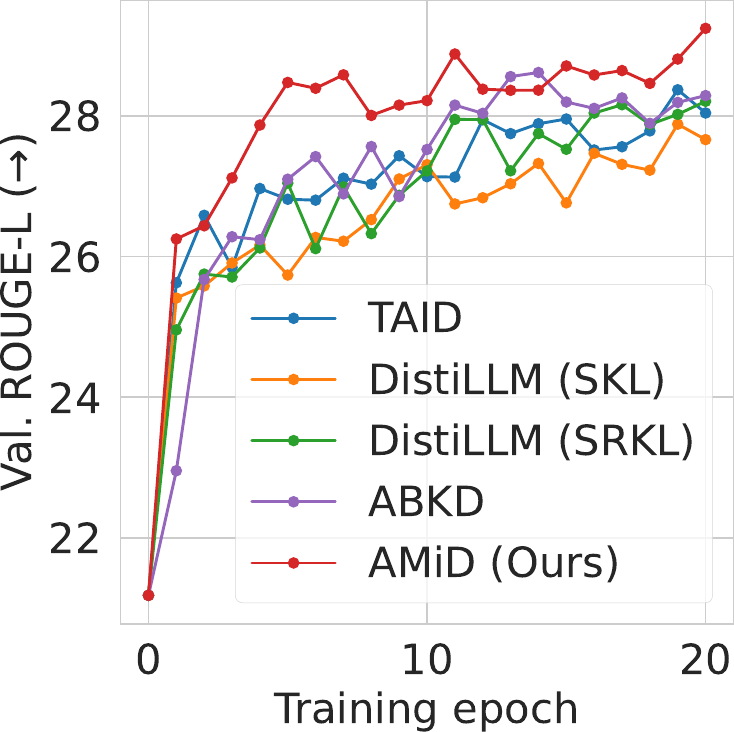}
        \vspace{-4ex}
        \captionof{figure}{ROUGE-L curve on Dolly dataset.}
        \label{fig: val curve}
    }
    \parbox{0.75\textwidth}{
        \centering
        \caption{Experimental results on the task-specific distillation. ``Trans.'' and ``Summ.'' indicate translation and summarization task, respectively. We use $D_{\text{KL}}$ and $\lambda=0.1$. $q_\theta$ indicates the no-assistant baseline. $\alpha=-1$ and $\alpha=1$ correspond to the assistant distribution employed in DistiLLM and TAID, respectively.}
        \vspace{-1ex}
        \resizebox{\linewidth}{!}{%
        \begin{tabular}{l ccc | ccc}
    \toprule\toprule
    & \multicolumn{3}{c}{SFTed Gemma-7B-It $\to$ Gemma-2B-It} & \multicolumn{3}{c}{SFTed Qwen2-7B-It to Qwen2-0.5B-It} \\
    \cmidrule{2-7}
    \multirow{2}{*}{ Model } & Trans. & Summ. & GSM8K. & Trans. & Summ. & GSM8K \\
     & COMET ($\uparrow$) & ROUGE-L ($\uparrow$) & Acc ($\uparrow$) & COMET ($\uparrow$) & ROUGE-L ($\uparrow$) & Acc ($\uparrow$) \\
    \midrule

    $q_\theta$ & \underline{74.21} & 34.88 & 24.26 & 58.07 & 31.67 & 33.13 \\
    AMiD ($\alpha = -1$) & 52.83 & 26.51 & 00.00 & 57.23 & \underline{32.27} & \underline{35.63} \\
    AMiD ($\alpha = 1$) & 74.20 & \underline{34.93} & \underline{24.49} & \underline{58.17} & 31.65 & 33.28 \\
    \textbf{AMiD (\boldmath$\alpha\neq\pm1$)} & \textbf{74.78} & \textbf{35.22} & \textbf{24.94} & \textbf{58.31} & \textbf{32.51} & \textbf{36.24} \\
    \bottomrule\bottomrule
    \label{tab: skd}
    \end{tabular}}
    }
    \vspace{-4.5ex}
\end{table*}

To support the results from the gradient analysis, we investigate the property of the optimized student distribution $q_\theta^*$ through the toy experiments. Figure~\ref{fig: toy exp} shows that the optimized student distribution $q_\theta^*$ with small $\alpha$ converges to one of the peak, which indicates the mode-seeking. As increasing $\alpha$, the $q_\theta^*$ gradually have thick tails while moving towards the average of $p$, which implies the mode-covering. These analyses indicate that the balance between mode-covering and mode-seeking, often attributed to divergence selection, might be controlled by $\alpha$ in the $\alpha$-mixture assistant distribution.
\begin{table*}[t]
    \centering
    \caption{ROUGE-L scores ($\uparrow$) with various divergences $D$ and $\alpha$. We utilize GPT-2 XL (1.5B) $\to$ GPT-2 (0.1B). We use $\lambda = 0.1$ for AMiD. $q_\theta$ indicates the no-assistant baseline. $\alpha=-1$ and $\alpha=1$ correspond to the assistant distribution employed in DistiLLM and TAID, respectively.}
    \vspace{-1ex}
    \label{tab: divergence and alpha}
    \resizebox{\textwidth}{!}{%
    \begin{tabular}{l |l |c| ccccc |c}
    \toprule\toprule
    Divergence $D$ & Assistant $r_\theta^{(\alpha,\lambda)}$ & Val. ($\uparrow$) & Dolly Eval ($\uparrow$) & Self Inst ($\uparrow$) & Vicuna ($\uparrow$) & Super NI ($\uparrow$) & UnNI ($\uparrow$) & Avg. ($\uparrow$) \\
    \midrule
    \multirow{8}{*}{$D_{\text{KL}}(p \| r_\theta^{(\alpha,\lambda)})$} & $q_\theta$ & 25.25 & 22.96 \stdpm{0.23} & 10.54 \stdpm{0.14} & 15.33 \stdpm{0.13} & 18.10 \stdpm{0.26} & 21.10 \stdpm{0.16} & 17.61 \\
     & \textbf{AMiD (\boldmath$\alpha=-5.0$)} & \textbf{28.99} & \textbf{25.86} \stdpm{0.10} & \textbf{13.72} \stdpm{0.42} & 15.90 \stdpm{0.29} & \textbf{28.32} \stdpm{0.24} & \textbf{29.52} \stdpm{0.06} & \textbf{22.66} \\
     & AMiD ($\alpha=-3.0$) & \underline{28.47} & \underline{25.72} \stdpm{0.17} & \underline{13.68} \stdpm{0.19} & \textbf{16.71} \stdpm{0.30} & \underline{27.30} \stdpm{0.30} & \underline{29.03} \stdpm{0.12} & \underline{22.49} \\
     & AMiD ($\alpha=-1.0$) & 27.88 & 25.50 \stdpm{0.28} & 12.35 \stdpm{0.39} & 16.10 \stdpm{0.22} & 23.87 \stdpm{0.39} & 26.16 \stdpm{0.06} & 20.80 \\
     & AMiD ($\alpha=-0.5$) & 27.37 & 24.17 \stdpm{0.37} & 12.15 \stdpm{0.49} & \underline{16.37} \stdpm{0.38} & 24.34 \stdpm{0.20} & 24.36 \stdpm{0.06} & 20.28 \\
     & AMiD ($\alpha=0.0$)  & 26.37 & 24.08 \stdpm{0.25} & 10.65 \stdpm{0.20} & 16.27 \stdpm{0.24} & 20.09 \stdpm{0.20} & 22.71 \stdpm{0.13} & 18.76 \\
     & AMiD ($\alpha=0.5$)  & 25.56 & 22.81 \stdpm{0.22} & 10.77 \stdpm{0.40} & 16.24 \stdpm{0.23} & 18.96 \stdpm{0.45} & 22.13 \stdpm{0.10} & 18.18 \\
     & AMiD ($\alpha=1.0$)  & 25.31 & 22.99 \stdpm{0.12} & 11.17 \stdpm{0.51} & 15.97 \stdpm{0.46} & 18.74 \stdpm{0.40} & 21.94 \stdpm{0.16} & 18.16 \\
     \midrule
     
     \multirow{8}{*}{$D_{\text{RKL}}(p \| r_\theta^{(\alpha,\lambda)})$} & $q_\theta$ & \underline{28.85} & \textbf{26.67} \stdpm{0.50} & 12.32 \stdpm{0.43} & \underline{17.48} \stdpm{0.30} & 24.25 \stdpm{0.19} & 26.56 \stdpm{0.19} & 21.46 \\
     & AMiD ($\alpha=-5.0$) & 28.39 & 26.16 \stdpm{0.33} & \textbf{12.95} \stdpm{0.57} & 17.39 \stdpm{0.39} & \underline{24.59} \stdpm{0.22} & \underline{27.17} \stdpm{0.09} & \underline{21.65} \\
     & \textbf{AMiD (\boldmath$\alpha=-3.0$)} & 28.75 & \underline{26.47} \stdpm{0.12} & \underline{12.71} \stdpm{0.21} & 17.17 \stdpm{0.42} & \textbf{27.00} \stdpm{0.11} & \textbf{28.16} \stdpm{0.12} & \textbf{22.30} \\
     & AMiD ($\alpha=-1.0$) & \textbf{28.97} & 22.96 \stdpm{0.23} & 12.34 \stdpm{0.24} & 17.27 \stdpm{0.64} & 22.44 \stdpm{0.36} & 25.68 \stdpm{0.10} & 20.83 \\
     & AMiD ($\alpha=-0.5$) & 28.25 & 26.15 \stdpm{0.30} & 11.81 \stdpm{0.29} & 16.54 \stdpm{0.22} & 23.49 \stdpm{0.25} & 25.87 \stdpm{0.06} & 20.77 \\    
     & AMiD ($\alpha=0.0$) & 28.80 & 25.84 \stdpm{0.22} & 12.06 \stdpm{0.13} & \textbf{17.71} \stdpm{0.65} & 22.72 \stdpm{0.24} & 25.36 \stdpm{0.10} & 20.74 \\
     & AMiD ($\alpha=0.5$) & 28.45 & 25.42 \stdpm{0.11} & 11.45 \stdpm{0.26} & 17.31 \stdpm{0.38} & 21.58 \stdpm{0.24} & 24.43 \stdpm{0.10} & 20.04 \\
     & AMiD ($\alpha=1.0$) & 0.16 & 4.27 \stdpm{0.01} & 2.81 \stdpm{0.02} & 9.12 \stdpm{0.06} & 1.64 \stdpm{0.00} & 1.84 \stdpm{0.0} & 3.94 \\
     \midrule
     
     \multirow{8}{*}{$D_{\text{AB}}(p \| r_\theta^{(\alpha,\lambda)})$} & $q_\theta$ & 28.61 & 25.49 \stdpm{0.24} & 12.52 \stdpm{0.52} & 17.36 \stdpm{0.55} & 26.07 \stdpm{0.14} & 27.36 \stdpm{0.10} & 21.76 \\
     & \textbf{AMiD (\boldmath$\alpha=-5.0$)} & \textbf{29.24} & \textbf{26.44} \stdpm{0.12} & \textbf{13.74} \stdpm{0.49} & \underline{16.76} \stdpm{0.24} & \textbf{29.71} \stdpm{0.08} & \textbf{30.35} \stdpm{0.09} & \textbf{23.40} \\
     & AMiD ($\alpha=-3.0$) & \underline{29.07} & \underline{26.38} \stdpm{0.18} & 13.58 \stdpm{0.57} & 16.11 \stdpm{0.18} & \underline{29.27} \stdpm{0.14} & \underline{30.14} \stdpm{0.06} & \underline{23.10} \\
     & AMiD ($\alpha=-1.0$) & 28.70 & 26.10 \stdpm{0.24} & 13.34 \stdpm{0.25} & 16.71 \stdpm{0.27} & 26.55 \stdpm{0.17} & 29.55 \stdpm{0.11} & 22.45 \\
     & AMiD ($\alpha=-0.5$) & 28.70 & 26.37 \stdpm{0.27} & \underline{13.59} \stdpm{0.25} & \textbf{17.02} \stdpm{0.34} & 27.06 \stdpm{0.31} & 28.50 \stdpm{0.16} & 22.51 \\
     & AMiD ($\alpha=0.0$)  & 28.86 & 25.77 \stdpm{0.34} & 13.57 \stdpm{0.22} & 16.14 \stdpm{0.36} & 27.26 \stdpm{0.27} & 28.52 \stdpm{0.20} & 22.25 \\
     & AMiD ($\alpha=0.5$)  & 28.46 & 25.80 \stdpm{0.32} & 12.94 \stdpm{0.36} & 16.59 \stdpm{0.34} & 26.29 \stdpm{0.22} & 27.73 \stdpm{0.08} & 21.87 \\
     & AMiD ($\alpha=1.0$)  & 24.93 & 22.36 \stdpm{0.22} &  9.72 \stdpm{0.58} & 16.29 \stdpm{0.30} & 15.09 \stdpm{0.19} & 16.15 \stdpm{0.12} & 15.92 \\
    \bottomrule\bottomrule
    \end{tabular}}
\end{table*}
\vspace{-3ex}
\begin{figure}[t]
    \centering
    
    \begin{subfigure}[t]{0.49\linewidth}
        \centering
        \includegraphics[height=3cm]{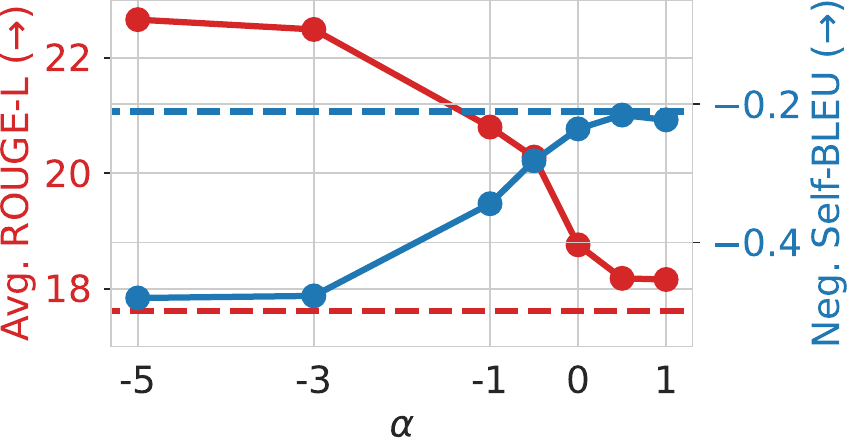}
        \vspace{-2ex}
        \caption{$ D_{\text{KL}}(p \| r_\theta^{(\alpha, \lambda)})$}
        \label{fig:diversity_fkl}
    \end{subfigure}
    \hfill
    \begin{subfigure}[t]{0.49\linewidth}
        \centering
        \includegraphics[height=3cm]{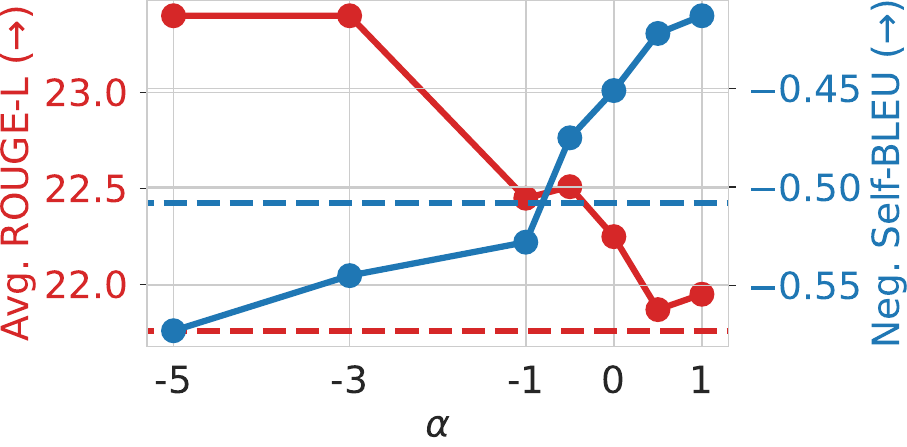}
        \vspace{-2ex}
        \caption{$ D_{\text{AB}}(p \| r_\theta^{(\alpha, \lambda)})$}
        \label{fig:diversity_ab}
    \end{subfigure}
    \vspace{-1ex}
    \caption{Performance curve on ROUGE-L (quality) and Self-BLEU (diversity). 
    Colored dashed lines: no-assistant baseline ($q_\theta$). We use $\lambda=0.1$.}
    \label{fig: trade-off}
    \vspace{-3ex}
\end{figure}

\section{Experiments}
\vspace{-1ex}
We consider both general instruction-following distillation and task-specific distillation to validate the effectiveness of AMiD. AMiD is primarily compared with methodologies, which inherently include the assistant distribution, such as GKD~\citep{agarwal2024policy}, DistiLLM~\citep{ko2024distillm}, TAID~\citep{shing2025taid}; and the state-of-the-art ABKD~\citep{wang2025abkd}. Please refer to Table~\ref{tab: all baselines} in Appendix~\ref{app: exp_additional} for further performance comparisons with additional baselines. We use $\alpha_{AB}$-$\beta_{AB}$-divergence with $\alpha_{AB}=0.2,\beta_{AB}=0.7$ and adopt adaptive off-policy training~\citep{ko2024distillm} as default. We explore $\alpha$ over the range $\{-5, -3, -1, -0.5, 0, 0.5, 1.0\}$ and employ $\lambda = 0.1$ as default by following prior work~\citep{ko2024distillm}. Additional details on datasets, models, and training details are provided in Appendix~\ref{app: exp_detail}.
\vspace{-1ex}
\subsection{Performance Comparison}
\vspace{-1ex}
\paragraph{Instruction-following Experiments.}
Table~\ref{tab: gpt2} reports results on the GPT-2 family~\citep{radford2019language} across different model sizes. AMiD consistently achieves the best performance in most evaluation settings, surpassing prior methods such as GKD, TAID, and DistiLLM, which also exploit assistant distributions. Notably, AMiD delivers substantial gains on SuperNI and UnNI, benchmarks requiring generalization to diverse and unseen instructions~\citep{wang-etal-2022-super}. These improvements suggest that AMiD promotes superior mode coverage and distributional alignment, thereby enhancing out-of-distribution generalization. Even when the capacity gap narrows for larger models (e.g., GPT-2 Large), AMiD continues to yield significant improvements, demonstrating that the $\alpha$-mixture assistant distribution benefits not only small students but also stronger ones, thereby validating its scalability and robustness. Figure~\ref{fig: val curve} further shows that AMiD consistently envelopes the baseline’s validation ROUGE-L curve, indicating both efficient and stable optimization. Additional comparisons with other baselines and experiments on OpenLLaMA2 are provided in Appendix~\ref{app: exp_additional}.
\vspace{-1ex}
\paragraph{Task-specific Experiments.}
To further investigate the effectiveness of AMiD, we consider various task-specific distillation, such as translation, summarization, and reasoning tasks. We adopt the implementation of SKD~\citep{xu2025speculative} and employ the fixed dataset strategy. As shown in Table~\ref{tab: skd}, using an assistant distribution achieves higher performance compared to the no-assistant baseline ABKD. Nevertheless, in the previous framework, where only $\alpha=\pm1$ is available, it exhibits mixed performance depending on the task and network. Our proposed framework, AMiD, which extends the range of $\alpha$, allows us to discover the high-performance assistant distribution. This generalization leads to consistent improvements over the baselines, achieving the best performance on all tasks.
\begin{table*}[t]
    \vspace{-3ex}
    \centering
    \caption{ROUGE-L scores ($\uparrow$) with various SGOs. We utilize GPT-2 XL (1.5B) $\to$ GPT-2 (0.1B). We use $\lambda = 0.1$ for these experiments. $q_\theta$ indicates no-assistant baseline ABKD for these experiments. We use $D_{AB}$ for AMiD. $\alpha=-1$ and $\alpha=1$ correspond to the assistant distribution employed in DistiLLM and TAID, respectively.}
    \vspace{-1.5ex}
    \label{tab: sgo}
    \resizebox{\textwidth}{!}{%
    \begin{tabular}{l | l |c| ccccc |c}
    \toprule\toprule
    Dataset $\mathcal{D}$ & Assistant $r_\theta^{(\alpha,\lambda)}$ & Val. ($\uparrow$) & Dolly Eval ($\uparrow$) & Self Inst ($\uparrow$) & Vicuna ($\uparrow$) & Super NI ($\uparrow$) & UnNI ($\uparrow$) & Avg. ($\uparrow$) \\
    \midrule
    \multirow{4}{*}{\makecell[l]{Fixed \\ \citep{hinton2015distilling}}}
     & $q_\theta$              & 27.06 & 24.81 \stdpm{0.28} & 11.25 \stdpm{0.26} & 15.05 \stdpm{0.21} & 21.78 \stdpm{0.15} & 23.73 \stdpm{0.09} & 19.32 \\
     & AMiD ($\alpha=-1$)      & \underline{27.35} & \textbf{25.34} \stdpm{0.28} & 11.54 \stdpm{0.26} & \underline{15.34} \stdpm{0.30} & 21.77 \stdpm{0.19} & 24.45 \stdpm{0.06} & 19.69 \\
     & AMiD ($\alpha=1$)       & 27.09 & 24.36 \stdpm{0.34} & \underline{12.40} \stdpm{0.41} & 14.37 \stdpm{0.31} & \underline{24.36} \stdpm{0.28} & \underline{26.28} \stdpm{0.05} & \underline{20.35} \\
     & \textbf{AMiD (\boldmath$\alpha\neq\pm1$)} & \textbf{27.84} & \underline{25.32} \stdpm{0.15} & \textbf{13.44} \stdpm{0.41} & \textbf{15.44} \stdpm{0.11} & \textbf{27.03} \stdpm{0.12} & \textbf{28.19} \stdpm{0.06} & \textbf{21.88} \\
     \midrule
    \multirow{4}{*}{\makecell[l]{On-policy \\ \citep{lin2020autoregressive}}} 
     & $q_\theta$       & 28.25 & \underline{25.70} \stdpm{0.20} & 13.03 \stdpm{0.17} & 16.86 \stdpm{0.34} & 24.67 \stdpm{0.16} & 27.47 \stdpm{0.15} & 21.55 \\
     & AMiD ($\alpha=-1$)     & \underline{28.60} & 25.43 \stdpm{0.34} & 12.96 \stdpm{0.58} & 16.59 \stdpm{0.39} & \underline{27.35} \stdpm{0.16} & \underline{29.93} \stdpm{0.07} & \underline{22.45} \\
     & AMiD ($\alpha=1$)       & \underline{28.60} & 25.12 \stdpm{0.55} & \underline{13.28} \stdpm{0.38} & \underline{17.08} \stdpm{0.52} & 24.35 \stdpm{0.23} & 26.94 \stdpm{0.19} & 21.35 \\
     & \textbf{AMiD (\boldmath$\alpha\neq\pm1$)} & \textbf{28.90} & \textbf{26.22} \stdpm{0.31} & \textbf{14.31} \stdpm{0.22} & \textbf{17.37} \stdpm{0.22} & \textbf{28.59} \stdpm{0.27} & \textbf{31.00} \stdpm{0.08} & \textbf{23.50} \\
     \midrule
    \multirow{4}{*}{\makecell[l]{Mixed \\ \citep{agarwal2024policy}}} 
     & $q_\theta$       & \underline{29.08} & 25.67 \stdpm{0.16} & 12.38 \stdpm{0.29} & \textbf{17.15} \stdpm{0.52} & 22.98 \stdpm{0.26} & 26.20 \stdpm{0.14} & 20.88 \\
     & AMiD ($\alpha=-1$)     & 28.79 & 25.65 \stdpm{0.25} & 11.98 \stdpm{0.28} & 16.94 \stdpm{0.13} & 23.82 \stdpm{0.17} & 26.25 \stdpm{0.06} & 20.93 \\
     & AMiD ($\alpha=1$)       & 28.06 & \underline{25.68} \stdpm{0.39} & \underline{12.81} \stdpm{0.20} & \underline{16.97} \stdpm{0.27} & \underline{24.91} \stdpm{0.21} & \underline{26.52} \stdpm{0.08} & \underline{21.38} \\
     & \textbf{AMiD (\boldmath$\alpha\neq\pm1$)} & \textbf{29.24} & \textbf{26.46} \stdpm{0.16} & \textbf{13.62} \stdpm{0.27} & 16.91 \stdpm{0.30} & \textbf{28.13} \stdpm{0.06} & \textbf{29.39} \stdpm{0.07} & \textbf{22.90} \\
     \midrule
    \multirow{4}{*}{\makecell[l]{Adaptive off-policy \\ \citep{ko2024distillm}}} 
     & $q_\theta$       & 28.61 & 25.49 \stdpm{0.24} & 12.52 \stdpm{0.52} & \textbf{17.36} \stdpm{0.55} & 26.07 \stdpm{0.14} & 27.36 \stdpm{0.10} & 21.76 \\
     & AMiD ($\alpha=-1$)     & \underline{28.70} & \underline{26.10} \stdpm{0.24} & 13.34 \stdpm{0.25} & 16.71 \stdpm{0.27} & \underline{26.55} \stdpm{0.17} & \underline{29.55} \stdpm{0.11} & \underline{22.45} \\
     & AMiD ($\alpha=1$)       & 27.80 & 25.78 \stdpm{0.44} & \textbf{13.74} \stdpm{0.19} & 16.42 \stdpm{0.22} & 26.04 \stdpm{0.22} & 27.79 \stdpm{0.09} & 21.95 \\
     & \textbf{AMiD (\boldmath$\alpha\neq\pm1$)} & \textbf{29.24} & \textbf{26.44} \stdpm{0.12} & \textbf{13.74} \stdpm{0.49} & \underline{16.76} \stdpm{0.24} & \textbf{29.71} \stdpm{0.08} & \textbf{30.35} \stdpm{0.09} & \textbf{23.40} \\
    \bottomrule\bottomrule
    \end{tabular}}
    \vspace{-0.5ex}
\end{table*}
\begin{table*}[t]
    \centering

    \parbox{0.54\textwidth}{
        \centering
        \includegraphics[width=\linewidth]{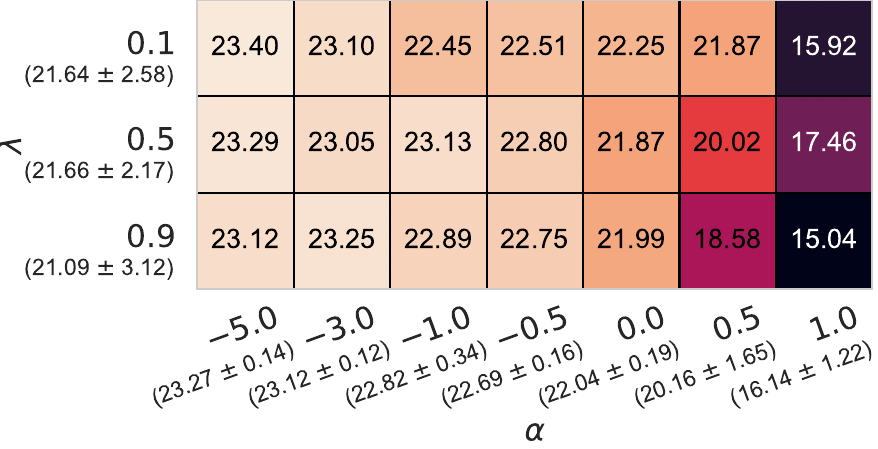}
        \vspace{-4ex}
        \captionof{figure}{Relationship between $\alpha$ and $\lambda$ under $D_{AB}$.}
        \label{fig: lambda}
    }
    \hfill
    \parbox{0.44\textwidth}{
        \centering
        \caption{Win Rate (WR, \%) scores on three instruction-following benchmarks. We utilize Qwen2.5-14B-Instruct as a teacher and Qwen2.5-1.5B-Instruct as a student.}
        \label{tab: large-teacher}
        \vspace{-2ex}
        \resizebox{\linewidth}{!}{
        \begin{tabular}{l | c c c | c}
        \toprule\toprule
        Model &
        \makecell{AlpacaEval\\WR (↑)} &
        \makecell{Evol-Inst\\WR (↑)} &
        \makecell{UltraFeed\\WR (↑)} &
        Avg. (↑) \\
        \midrule
        Teacher  & \large 95.7 & \large 92.2 & \large 84.9 & \large 90.9 \\
        Student  & \large 64.3 & \large 47.2 & \large 40.4 & \large 50.6 \\
        \midrule
        \makecell{DistiLLM-2\\($\alpha = -1$)} & \large 80.2 & \large 70.4 & \large 61.7 & \large 70.8 \\
        \makecell{DistiLLM-2\\($\alpha = -5$)} & \large \textbf{81.3} & \large \textbf{71.1} & \large \textbf{63.6} & \large \textbf{72.0} \\
        \bottomrule\bottomrule
        \end{tabular}}
    }
    \vspace{-3ex}
\end{table*}
\vspace{-1ex}
\subsection{Ablation Study and Additional analysis}
\vspace{-1ex}
\paragraph{Balancing Mode-covering and Mode-seeking via $\alpha$.}
As mentioned in Section~\ref{sec: amid}, we have demonstrated that even when employing the same divergence, adjusting the $\alpha$ of the $\alpha$-mixture assistant distribution allows us to control the mode-covering or mode-seeking property of the optimized student distribution. To further substantiate this analysis, we examine the trends of ROUGE-L, representing quality, and (Negative) Self-BLEU, representing diversity, by adjusting $\alpha$ with a fixed divergence. Figure~\ref{fig: trade-off} exhibits a clear quality-diversity trade-off for both KL divergence $D_\text{KL}$ and $\alpha$-$\beta$ divergence $D_\text{AB}$. Specifically, as $\alpha$ increases, quality decreases and diversity increases, supporting the theoretical analysis that the mode-covering property is enhanced. Decreasing $\alpha$ shows the opposite effect, aligning with the mode of teacher distribution, which indicates mode-seeking. These results demonstrate that, even under a fixed divergence, $\alpha$ serves as an effective control knob to balance quality and diversity.
\vspace{-1ex}

\paragraph{Relationship between $\alpha$ and $\lambda$.}
\vspace{-1ex}
To further examine the relationship between $\alpha$ and $\lambda$, we conduct the experiments on $\lambda = 0.1, 0.5, 0.9$ with $D_{\text{AB}}(p||r_\theta^{(\alpha,\lambda)})$. Figure~\ref{fig: lambda} presents the average performance of five instruction-following datasets among the various $\alpha$ and $\lambda$ combinations. The analyses of experimental results are as follows: (1) Across all tested values of $\lambda$, using a smaller $\alpha$ consistently achieves higher performance. This observation aligns with our theoretical analysis, indicating that a smaller $\alpha$ relatively induces a more mode-seeking behavior. (2) When $\lambda$ is too large ($\lambda = 0.9$), performance slightly degrades and exhibits a larger standard deviation. We attribute this to the assistant distribution being overly close to the teacher distribution, which 1) limits effective knowledge transfer and 2) makes the optimization more sensitive to curvature variations. In contrast, $\lambda = 0.5$ achieves both high and stable performance, demonstrating that choosing a midpoint provides robustness against curvature changes. (3) Compared to $\lambda$, the performance is generally more robust to changes in $\alpha$, as indicated by lower standard deviations. However, the $\alpha$ values close to 1 show noticeably higher standard deviations. We conjecture that this instability arises because the change of mixing coefficient ($\lambda$) between the teacher and student distributions makes hard mode-covering.

\paragraph{Compatibility with Divergences.}
\vspace{-1ex}
As mentioned in Section~\ref{sec: amid}, AMiD allows the use of the arbitrary divergence $D$ since the $\alpha$-mixture assistant distribution $r_{\theta}^{(\alpha,\lambda)}$ is a valid distribution. Therefore, we conduct performance comparisons across various combinations of divergences and $\alpha$ values to demonstrate the versatility of AMiD. In Table~\ref{tab: divergence and alpha}, we observed that AMiD generally achieves higher performance than the no-assistant baseline regardless of divergence in most settings. Notably, the highest-performing $\alpha$ was discovered in $\alpha\neq\pm1$ regions beyond the limited scope of prior works, and its values are generally small. These results confirm the universality of AMiD to generic divergences, representing its wide adaptiveness and flexibility. Please refer to Appendix~\ref{app: exp_fkl_qr} for the student-assistant cases $D(q_\theta, r_{\theta}^{(\alpha,\lambda)})$.

\paragraph{Universality to SGOs.}
\vspace{-1ex}
AMiD is a generalized framework from the view of assistant distribution $r_\theta^{(\alpha,\lambda)}$ and divergence $D$, and therefore is not constrained by the dataset $\mathcal{D}$. In Table~\ref{tab: sgo}, we confirm the universality of AMiD across various student-generated output (SGO) strategies. AMiD ($\alpha \neq \pm 1$) outperforms the no-assistant baseline and previous mixtures ($\alpha=\pm1$) by a significant margin across almost all metrics. These results indicate that AMiD is compatible with diverse SGO pipelines and remains effective regardless of how the datasets are collected.

\paragraph{Cooperation with Contrastive-based Distillation.}
\vspace{-1ex}
The proposed $\alpha$-mixture assistant distribution $r_\theta^{(\alpha,\lambda)}$ is a theoretically valid probability distribution for any $\alpha$ and $\lambda$. Therefore, the $r_\theta^{(\alpha,\lambda)}$ can be combined with the contrastive-based distillation methods beyond the divergence-based distillation methods. To validate this applicability, we replace the assistant distribution of DistiLLM-2 \citep{ko2025distillm}, which utilizes $m$-mixture ($\alpha = -1$), with our $\alpha$-mixture assistant distribution. As shown in Table~\ref{tab: distillm2}, incorporating our $\alpha$-mixture assistant distribution exhibits further performance improvements or competitive performances over the base contrastive method, DistiLLM-2.

\paragraph{Scalability to Large Teacher.}
\vspace{-1ex}
To demonstrate the extensibility, we conduct an experiment by distilling Qwen2.5-14B-Instruct into Qwen2.5-1.5B-Instruct under the instruction-following task. As shown in Table~\ref{tab: large-teacher}, integrating $\alpha$-mixture assistant distribution exhibits performance improvements over the DistiLLM-2. These results indicate that AMiD remains effective in a large teacher model.
\begin{table}[t]
\centering
\footnotesize
\caption{Performances on instruction-following, code generation, and mathematical reasoning. We utilize the Qwen2.5-Instruct models, which are appropriate for each task, employing a 7B model as the teacher and a 1.5B model as the student. For evaluation, we employ LLM-as-a-Judge~\citep{zheng2023judging} with GPT-4o for AlpacaEval and Evol-Instruct, GPT-4o-mini for UltraFeedback.}
\vspace{-1ex}
\label{tab: distillm2}
\resizebox{\textwidth}{!}{
\begin{tabular}{l | cccc | ccc | c }
\toprule\toprule
\multirow{2}{*}{Model} 
& \multicolumn{4}{c|}{Instruction-following} 
& \multicolumn{3}{c|}{Code generation} 
& \multicolumn{1}{c}{Math reasoning} \\
\cmidrule(lr){2-5} \cmidrule(lr){6-8} \cmidrule(lr){9-9}
& \makecell{AlpacaEval\\WR ($\uparrow$)} & \makecell{Evol-Inst\\WR ($\uparrow$)} & \makecell{UltraFeed\\WR ($\uparrow$)} & \makecell{Avg.\\WR ($\uparrow$)}
& \makecell{HumanEval\\pass@1 ($\uparrow$)} & \makecell{MBPP\\pass@1 ($\uparrow$)} & \makecell{Avg. \\pass@1 ($\uparrow$)}
& GSM8K ($\uparrow$) \\
\midrule
Teacher & 93.7 & 89.6 & 80.8 & 88.0 & 90.9 & 83.1 & 87.0 & 89.3 \\
Student & 64.2 & 46.2 & 40.0 & 50.1 & 70.7 & 69.3 & 70.0 & 74.3 \\
\midrule
DistiLLM-2 ($\alpha=-1$) & 79.5 & 69.0 & 62.6 & 70.4 & 72.0 & \textbf{74.6} & 73.3 & 76.9 \\
DistiLLM-2 ($\alpha=-5$) & \textbf{80.7} & \textbf{71.0} & \textbf{63.3} & \textbf{71.7} & \textbf{73.2} & 73.5 & \textbf{73.4} & \textbf{77.4} \\
\bottomrule\bottomrule
\end{tabular}
}
\vspace{-4ex}
\end{table}
\vspace{-2ex}
\section{Conclusion}
\vspace{-2ex}
This work introduces a unified framework for KD in LLMs by proposing the $\alpha$-mixture assistant distribution and the corresponding distillation method, AMiD. Our approach systematically generalizes previous fragmented methods and enables flexible interpolation between teacher and student. Theoretical and empirical analyses congruently demonstrate that the design parameter $\alpha$ controls the mode-seeking vs. mode-covering behavior. AMiD consistently outperforms prior KD methods across diverse settings and establishes a new foundation for assistant-guided KD for LLMs.
\subsubsection*{Acknowledgments}
This work was supported by the IITP (Institute of Information \& Communications Technology Planning \& Evaluation)-ITRC (Information Technology Research Center) grant funded by the Korea government (Ministry of Science and ICT) (IITP-2026-RS-2024-00437268), with a 50\% contribution rate. This research was also supported by AI Technology Development for Commonsense Extraction, Reasoning, and Inference from Heterogeneous Data(IITP) funded by the Ministry of Science and ICT(RS-2022-II220077), with a 50\% contribution rate.
\bibliography{iclr2026_conference}

@inproceedings{agarwal2024policy,
  title={On-policy distillation of language models: Learning from self-generated mistakes},
  author={Agarwal, Rishabh and Vieillard, Nino and Zhou, Yongchao and Stanczyk, Piotr and Garea, Sabela Ramos and Geist, Matthieu and Bachem, Olivier},
  booktitle={The twelfth international conference on learning representations},
  year={2024}
}

@inproceedings{ko2024distillm,
  title={DistiLLM: Towards Streamlined Distillation for Large Language Models},
  author={Ko, Jongwoo and Kim, Sungnyun and Chen, Tianyi and Yun, Se-Young},
  booktitle={International Conference on Machine Learning},
  pages={24872--24895},
  year={2024},
  organization={PMLR}
}

@inproceedings{
shing2025taid,
title={{TAID}: Temporally Adaptive Interpolated Distillation for Efficient Knowledge Transfer in Language Models},
author={Makoto Shing and Kou Misaki and Han Bao and Sho Yokoi and Takuya Akiba},
booktitle={The Thirteenth International Conference on Learning Representations},
year={2025},
}

@book{bullen2013handbook,
  title={Handbook of means and their inequalities},
  author={Bullen, Peter S},
  volume={560},
  year={2013},
  publisher={Springer Science \& Business Media}
}

@article{hinton2015distilling,
  title={Distilling the knowledge in a neural network},
  author={Hinton, Geoffrey and Vinyals, Oriol and Dean, Jeff},
  journal={arXiv preprint arXiv:1503.02531},
  year={2015}
}

@inproceedings{lin2020autoregressive,
  title={Autoregressive knowledge distillation through imitation learning},
  author={Lin, Alexander and Wohlwend, Jeremy and Chen, Howard and Lei, Tao},
  booktitle={Proceedings of the 2020 Conference on Empirical Methods in Natural Language Processing (EMNLP)},
  pages={6121--6133},
  year={2020}
}

@inproceedings{wang2025abkd,
  title={ABKD: Pursuing a Proper Allocation of the Probability Mass in Knowledge Distillation via $\alpha$-$\beta$-Divergence},
  author={Wang, Guanghui and Yang, Zhiyong and Wang, Zitai and Wang, Shi and Xu, Qianqian and Huang, Qingming},
  booktitle={International Conference on Machine Learning},
  pages={65167--65212},
  year={2025},
  organization={PMLR}
}

@book{kolmogorov1930notion,
  title={Sur la notion de la moyenne},
  author={Kolmogorov, Andre{\u{\i}} Nikolaevich and Castelnuovo, Guido},
  year={1930},
  publisher={G. Bardi, tip. della R. Accad. dei Lincei}
}

@book{hardy1952inequalities,
  title={Inequalities},
  author={Hardy, GH},
  year={1952},
  publisher={Cambridge University Press}
}

@article{amari2007integration,
  title={Integration of stochastic models by minimizing $\alpha$-divergence},
  author={Amari, Shun-ichi},
  journal={Neural computation},
  volume={19},
  number={10},
  pages={2780--2796},
  year={2007},
  publisher={MIT Press}
}

@inproceedings{
gu2024minillm,
title={Mini{LLM}: Knowledge Distillation of Large Language Models},
author={Yuxian Gu and Li Dong and Furu Wei and Minlie Huang},
booktitle={The Twelfth International Conference on Learning Representations},
year={2024},
url={https://openreview.net/forum?id=5h0qf7IBZZ}
}

@inproceedings{wu2025rethinking,
  title={Rethinking kullback-leibler divergence in knowledge distillation for large language models},
  author={Wu, Taiqiang and Tao, Chaofan and Wang, Jiahao and Yang, Runming and Zhao, Zhe and Wong, Ngai},
  booktitle={Proceedings of the 31st International Conference on Computational Linguistics},
  pages={5737--5755},
  year={2025}
}

@inproceedings{mirzadeh2020improved,
  title={Improved knowledge distillation via teacher assistant},
  author={Mirzadeh, Seyed Iman and Farajtabar, Mehrdad and Li, Ang and Levine, Nir and Matsukawa, Akihiro and Ghasemzadeh, Hassan},
  booktitle={Proceedings of the AAAI conference on artificial intelligence},
  volume={34},
  number={04},
  pages={5191--5198},
  year={2020}
}

@inproceedings{masrani2021q,
  title={q-paths: Generalizing the geometric annealing path using power means},
  author={Masrani, Vaden and Brekelmans, Rob and Bui, Thang and Nielsen, Frank and Galstyan, Aram and Ver Steeg, Greg and Wood, Frank},
  booktitle={Uncertainty in Artificial Intelligence},
  pages={1938--1947},
  year={2021},
  organization={PMLR}
}

@article{sun2025warmup,
  title={Warmup-Distill: Bridge the Distribution Mismatch between Teacher and Student before Knowledge Distillation},
  author={Sun, Zengkui and Liu, Yijin and Meng, Fandong and Chen, Yufeng and Xu, Jinan and Zhou, Jie},
  journal={arXiv preprint arXiv:2502.11766},
  year={2025}
}

@inproceedings{zhang-etal-2023-lifting,
    title = "Lifting the Curse of Capacity Gap in Distilling Language Models",
    author = "Zhang, Chen  and
      Yang, Yang  and
      Liu, Jiahao  and
      Wang, Jingang  and
      Xian, Yunsen  and
      Wang, Benyou  and
      Song, Dawei",
    editor = "Rogers, Anna  and
      Boyd-Graber, Jordan  and
      Okazaki, Naoaki",
    booktitle = "Proceedings of the 61st Annual Meeting of the Association for Computational Linguistics (Volume 1: Long Papers)",
    month = jul,
    year = "2023",
    address = "Toronto, Canada",
    publisher = "Association for Computational Linguistics",
    url = "https://aclanthology.org/2023.acl-long.249/",
    doi = "10.18653/v1/2023.acl-long.249",
    pages = "4535--4553",
    abstract = "Pretrained language models (LMs) have shown compelling performance on various downstream tasks, but unfortunately they require a tremendous amount of inference compute. Knowledge distillation finds a path to compress LMs to small ones with a teacher-student paradigm. However, when the capacity gap between the teacher and the student is large, a curse of capacity gap appears, invoking a deficiency in distilling LMs. While a few studies have been carried out to fill the gap, the curse is not yet well tackled. In this paper, we aim at lifting the curse of capacity gap via enlarging the capacity of the student without notably increasing the inference compute. Largely motivated by sparse activation regime of mixture of experts (MoE), we propose a mixture of minimal experts (MiniMoE), which imposes extra parameters to the student but introduces almost no additional inference compute. Experimental results on GLUE and CoNLL demonstrate the curse of capacity gap is lifted by the magic of MiniMoE to a large extent. MiniMoE also achieves the state-of-the-art performance at small FLOPs compared with a range of competitive baselines. With a compression rate as much as {\textasciitilde}50$\times$, MiniMoE preserves {\textasciitilde}95{\%} GLUE score of the teacher."
}

@book{amari2016information,
  title={Information geometry and its applications},
  author={Amari, Shun-ichi},
  volume={194},
  year={2016},
  publisher={Springer}
}

@inproceedings{ko2025distillm,
  title={DistiLLM-2: A Contrastive Approach Boosts the Distillation of LLMs},
  author={Ko, Jongwoo and Chen, Tianyi and Kim, Sungnyun and Ding, Tianyu and Liang, Luming and Zharkov, Ilya and Yun, Se-Young},
  booktitle={International Conference on Machine Learning},
  pages={31044--31062},
  year={2025},
  organization={PMLR}
}

@inproceedings{xu2025speculative,
  title={Speculative knowledge distillation: Bridging the teacher-student gap through interleaved sampling},
  author={Xu, Wenda and Han, Rujun and Wang, Zifeng and Le, Long and Madeka, Dhruv and Li, Lei and Wang, William and Agarwal, Rishabh and Lee, Chen-Yu and Pfister, Tomas},
  booktitle={International Conference on Learning Representations},
  volume={2025},
  pages={64616--64646},
  year={2025}
}

@inproceedings{kim2016sequence,
  title={Sequence-level knowledge distillation},
  author={Kim, Yoon and Rush, Alexander M},
  booktitle={Proceedings of the 2016 conference on empirical methods in natural language processing},
  pages={1317--1327},
  year={2016}
}

@article{kingma2014adam,
  title={Adam: A method for stochastic optimization},
  author={Kingma, Diederik P and Ba, Jimmy},
  journal={arXiv preprint arXiv:1412.6980},
  year={2014}
}

@article{radford2019language,
  title={Language models are unsupervised multitask learners},
  author={Radford, Alec and Wu, Jeffrey and Child, Rewon and Luan, David and Amodei, Dario and Sutskever, Ilya and others},
  journal={OpenAI blog},
  volume={1},
  number={8},
  pages={9},
  year={2019}
}

@article{conover2023free,
  title={Free dolly: Introducing the world’s first truly open instructiontuned llm},
  author={Conover, Mike and Hayes, Matt and Mathur, Ankit and Xie, Jianwei and Wan, Jun and Shah, Sam and Ghodsi, Ali and Wendell, Patrick and Zaharia, Matei and Xin, Reynold},
  year={2023}
}

@software{openlm2023openllama,
  author = {Geng, Xinyang and Liu, Hao},
  title = {OpenLLaMA: An Open Reproduction of LLaMA},
  month = May,
  year = 2023,
  url = {https://github.com/openlm-research/open_llama}
}

@misc{Gokaslan2019OpenWeb,  
	title={OpenWebText Corpus},
	author={Aaron Gokaslan and Vanya Cohen},
	howpublished={\url{http://Skylion007.github.io/OpenWebTextCorpus}}, 
	year={2019}
}

@inproceedings{wang-etal-2023-self-instruct,
    title = "Self-Instruct: Aligning Language Models with Self-Generated Instructions",
    author = "Wang, Yizhong  and
      Kordi, Yeganeh  and
      Mishra, Swaroop  and
      Liu, Alisa  and
      Smith, Noah A.  and
      Khashabi, Daniel  and
      Hajishirzi, Hannaneh",
    editor = "Rogers, Anna  and
      Boyd-Graber, Jordan  and
      Okazaki, Naoaki",
    booktitle = "Proceedings of the 61st Annual Meeting of the Association for Computational Linguistics (Volume 1: Long Papers)",
    month = jul,
    year = "2023",
    address = "Toronto, Canada",
    publisher = "Association for Computational Linguistics",
    url = "https://aclanthology.org/2023.acl-long.754/",
    doi = "10.18653/v1/2023.acl-long.754",
    pages = "13484--13508",
    abstract = "Large ``instruction-tuned'' language models (i.e., finetuned to respond to instructions) have demonstrated a remarkable ability to generalize zero-shot to new tasks. Nevertheless, they depend heavily on human-written instruction data that is often limited in quantity, diversity, and creativity, therefore hindering the generality of the tuned model. We introduce Self-Instruct, a framework for improving the instruction-following capabilities of pretrained language models by bootstrapping off their own generations. Our pipeline generates instructions, input, and output samples from a language model, then filters invalid or similar ones before using them to finetune the original model. Applying our method to the vanilla GPT3, we demonstrate a 33{\%} absolute improvement over the original model on Super-NaturalInstructions, on par with the performance of InstructGPT-001, which was trained with private user data and human annotations. For further evaluation, we curate a set of expert-written instructions for novel tasks, and show through human evaluation that tuning GPT3 with Self-Instruct outperforms using existing public instruction datasets by a large margin, leaving only a 5{\%} absolute gap behind InstructGPT-001. Self-Instruct provides an almost annotation-free method for aligning pre-trained language models with instructions, and we release our large synthetic dataset to facilitate future studies on instruction tuning."
}

@misc{vicuna2023,
    title = {Vicuna: An Open-Source Chatbot Impressing GPT-4 with 90\%* ChatGPT Quality},
    url = {https://lmsys.org/blog/2023-03-30-vicuna/},
    author = {Chiang, Wei-Lin and Li, Zhuohan and Lin, Zi and Sheng, Ying and Wu, Zhanghao and Zhang, Hao and Zheng, Lianmin and Zhuang, Siyuan and Zhuang, Yonghao and Gonzalez, Joseph E. and Stoica, Ion and Xing, Eric P.},
    month = {March},
    year = {2023}
}

@inproceedings{wang-etal-2022-super,
    title = "Super-{N}atural{I}nstructions: Generalization via Declarative Instructions on 1600+ {NLP} Tasks",
    author = "Wang, Yizhong  and
      Mishra, Swaroop  and
      Alipoormolabashi, Pegah  and
      Kordi, Yeganeh  and
      Mirzaei, Amirreza  and
      Naik, Atharva  and
      Ashok, Arjun  and
      Dhanasekaran, Arut Selvan  and
      Arunkumar, Anjana  and
      Stap, David  and
      Pathak, Eshaan  and
      Karamanolakis, Giannis  and
      Lai, Haizhi  and
      Purohit, Ishan  and
      Mondal, Ishani  and
      Anderson, Jacob  and
      Kuznia, Kirby  and
      Doshi, Krima  and
      Pal, Kuntal Kumar  and
      Patel, Maitreya  and
      Moradshahi, Mehrad  and
      Parmar, Mihir  and
      Purohit, Mirali  and
      Varshney, Neeraj  and
      Kaza, Phani Rohitha  and
      Verma, Pulkit  and
      Puri, Ravsehaj Singh  and
      Karia, Rushang  and
      Doshi, Savan  and
      Sampat, Shailaja Keyur  and
      Mishra, Siddhartha  and
      Reddy A, Sujan  and
      Patro, Sumanta  and
      Dixit, Tanay  and
      Shen, Xudong",
    editor = "Goldberg, Yoav  and
      Kozareva, Zornitsa  and
      Zhang, Yue",
    booktitle = "Proceedings of the 2022 Conference on Empirical Methods in Natural Language Processing",
    month = dec,
    year = "2022",
    address = "Abu Dhabi, United Arab Emirates",
    publisher = "Association for Computational Linguistics",
    url = "https://aclanthology.org/2022.emnlp-main.340/",
    doi = "10.18653/v1/2022.emnlp-main.340",
    pages = "5085--5109",
    abstract = "How well can NLP models generalize to a variety of unseen tasks when provided with task instructions? To address this question, we first introduce Super-NaturalInstructions, a benchmark of 1,616 diverse NLP tasks and their expert-written instructions. Our collection covers 76 distinct task types, including but not limited to classification, extraction, infilling, sequence tagging, text rewriting, and text composition. This large and diverse collection of tasks enables rigorous benchmarking of cross-task generalization under instructions{---}training models to follow instructions on a subset of tasks and evaluating them on the remaining unseen ones.Furthermore, we build Tk-Instruct, a transformer model trained to follow a variety of in-context instructions (plain language task definitions or k-shot examples). Our experiments show that Tk-Instruct outperforms existing instruction-following models such as InstructGPT by over 9{\%} on our benchmark despite being an order of magnitude smaller. We further analyze generalization as a function of various scaling parameters, such as the number of observed tasks, the number of instances per task, and model sizes. We hope our dataset and model facilitate future progress towards more general-purpose NLP models."
}

@inproceedings{honovich-etal-2023-unnatural,
    title = "Unnatural Instructions: Tuning Language Models with (Almost) No Human Labor",
    author = "Honovich, Or  and
      Scialom, Thomas  and
      Levy, Omer  and
      Schick, Timo",
    editor = "Rogers, Anna  and
      Boyd-Graber, Jordan  and
      Okazaki, Naoaki",
    booktitle = "Proceedings of the 61st Annual Meeting of the Association for Computational Linguistics (Volume 1: Long Papers)",
    month = jul,
    year = "2023",
    address = "Toronto, Canada",
    publisher = "Association for Computational Linguistics",
    url = "https://aclanthology.org/2023.acl-long.806/",
    doi = "10.18653/v1/2023.acl-long.806",
    pages = "14409--14428",
    abstract = "Instruction tuning enables pretrained language models to perform new tasks from inference-time natural language descriptions. These approaches rely on vast amounts of human supervision in the form of crowdsourced datasets or user interactions. In this work, we introduce Unnatural Instructions: a large dataset of creative and diverse instructions, collected with virtually no human labor. We collect 64,000 examples by prompting a language model with three seed examples of instructions and eliciting a fourth. This set is then expanded by prompting the model to rephrase each instruction, creating a total of approximately 240,000 examples of instructions, inputs, and outputs. Experiments show that despite containing a fair amount of noise, training on Unnatural Instructions rivals the effectiveness of training on open-source manually-curated datasets, surpassing the performance of models such as T0++ and Tk-Instruct across various benchmarks. These results demonstrate the potential of model-generated data as a cost-effective alternative to crowdsourcing for dataset expansion and diversification."
}

@inproceedings{lin-2004-rouge,
    title = "{ROUGE}: A Package for Automatic Evaluation of Summaries",
    author = "Lin, Chin-Yew",
    booktitle = "Text Summarization Branches Out",
    month = jul,
    year = "2004",
    address = "Barcelona, Spain",
    publisher = "Association for Computational Linguistics",
    url = "https://aclanthology.org/W04-1013/",
    pages = "74--81"
}

@inproceedings{zhu2018texygen,
  title={Texygen: A benchmarking platform for text generation models},
  author={Zhu, Yaoming and Lu, Sidi and Zheng, Lei and Guo, Jiaxian and Zhang, Weinan and Wang, Jun and Yu, Yong},
  booktitle={The 41st international ACM SIGIR conference on research \& development in information retrieval},
  pages={1097--1100},
  year={2018}
}

@article{zheng2023judging,
  title={Judging llm-as-a-judge with mt-bench and chatbot arena},
  author={Zheng, Lianmin and Chiang, Wei-Lin and Sheng, Ying and Zhuang, Siyuan and Wu, Zhanghao and Zhuang, Yonghao and Lin, Zi and Li, Zhuohan and Li, Dacheng and Xing, Eric and others},
  journal={Advances in neural information processing systems},
  volume={36},
  pages={46595--46623},
  year={2023}
}

@inproceedings{chen-etal-2021-dialogsum,
    title = "{D}ialog{S}um: {A} Real-Life Scenario Dialogue Summarization Dataset",
    author = "Chen, Yulong  and
      Liu, Yang  and
      Chen, Liang  and
      Zhang, Yue",
    editor = "Zong, Chengqing  and
      Xia, Fei  and
      Li, Wenjie  and
      Navigli, Roberto",
    booktitle = "Findings of the Association for Computational Linguistics: ACL-IJCNLP 2021",
    month = aug,
    year = "2021",
    address = "Online",
    publisher = "Association for Computational Linguistics",
    url = "https://aclanthology.org/2021.findings-acl.449/",
    doi = "10.18653/v1/2021.findings-acl.449",
    pages = "5062--5074"
}

@article{costa2022no,
  title={No language left behind: Scaling human-centered machine translation},
  author={Costa-Juss{\`a}, Marta R and Cross, James and {\c{C}}elebi, Onur and Elbayad, Maha and Heafield, Kenneth and Heffernan, Kevin and Kalbassi, Elahe and Lam, Janice and Licht, Daniel and Maillard, Jean and others},
  journal={arXiv preprint arXiv:2207.04672},
  year={2022}
}

@article{cobbe2021training,
  title={Training verifiers to solve math word problems},
  author={Cobbe, Karl and Kosaraju, Vineet and Bavarian, Mohammad and Chen, Mark and Jun, Heewoo and Kaiser, Lukasz and Plappert, Matthias and Tworek, Jerry and Hilton, Jacob and Nakano, Reiichiro and others},
  journal={arXiv preprint arXiv:2110.14168},
  year={2021}
}

@article{team2024gemma,
  title={Gemma: Open models based on gemini research and technology},
  author={Team, Gemma and Mesnard, Thomas and Hardin, Cassidy and Dadashi, Robert and Bhupatiraju, Surya and Pathak, Shreya and Sifre, Laurent and Rivi{\`e}re, Morgane and Kale, Mihir Sanjay and Love, Juliette and others},
  journal={arXiv preprint arXiv:2403.08295},
  year={2024}
}

@article{team2024qwen2,
  title={Qwen2 technical report},
  author={Team, Qwen},
  journal={arXiv preprint arXiv:2407.10671},
  volume={2},
  year={2024}
}

@article{ouyang2022training,
  title={Training language models to follow instructions with human feedback},
  author={Ouyang, Long and Wu, Jeffrey and Jiang, Xu and Almeida, Diogo and Wainwright, Carroll and Mishkin, Pamela and Zhang, Chong and Agarwal, Sandhini and Slama, Katarina and Ray, Alex and others},
  journal={Advances in neural information processing systems},
  volume={35},
  pages={27730--27744},
  year={2022}
}

@inproceedings{papineni2002bleu,
  title={Bleu: a method for automatic evaluation of machine translation},
  author={Papineni, Kishore and Roukos, Salim and Ward, Todd and Zhu, Wei-Jing},
  booktitle={Proceedings of the 40th annual meeting of the Association for Computational Linguistics},
  pages={311--318},
  year={2002}
}

@article{cichocki2011generalized,
  title={Generalized alpha-beta divergences and their application to robust nonnegative matrix factorization},
  author={Cichocki, Andrzej and Cruces, Sergio and Amari, Shun-ichi},
  journal={Entropy},
  volume={13},
  number={1},
  pages={134--170},
  year={2011},
  publisher={MDPI}
}

@article{nielsen2020elementary,
  title={An elementary introduction to information geometry},
  author={Nielsen, Frank},
  journal={Entropy},
  volume={22},
  number={10},
  pages={1100},
  year={2020},
  publisher={MDPI}
}

@article{eguchi2022minimum,
  title={Minimum divergence methods in statistical machine learning},
  author={Eguchi, Shinto and Komori, Osamu},
  journal={No Title},
  year={2022},
  publisher={Springer}
}

@article{grosse2013annealing,
  title={Annealing between distributions by averaging moments},
  author={Grosse, Roger B and Maddison, Chris J and Salakhutdinov, Russ R},
  journal={Advances in Neural Information Processing Systems},
  volume={26},
  year={2013}
}

@article{achiam2023gpt,
  title={Gpt-4 technical report},
  author={Achiam, Josh and Adler, Steven and Agarwal, Sandhini and Ahmad, Lama and Akkaya, Ilge and Aleman, Florencia Leoni and Almeida, Diogo and Altenschmidt, Janko and Altman, Sam and Anadkat, Shyamal and others},
  journal={arXiv preprint arXiv:2303.08774},
  year={2023}
}

@article{touvron2023llama,
  title={Llama 2: Open foundation and fine-tuned chat models},
  author={Touvron, Hugo and Martin, Louis and Stone, Kevin and Albert, Peter and Almahairi, Amjad and Babaei, Yasmine and Bashlykov, Nikolay and Batra, Soumya and Bhargava, Prajjwal and Bhosale, Shruti and others},
  journal={arXiv preprint arXiv:2307.09288},
  year={2023}
}

@article{dubois2023alpacafarm,
  title={Alpacafarm: A simulation framework for methods that learn from human feedback},
  author={Dubois, Yann and Li, Chen Xuechen and Taori, Rohan and Zhang, Tianyi and Gulrajani, Ishaan and Ba, Jimmy and Guestrin, Carlos and Liang, Percy S and Hashimoto, Tatsunori B},
  journal={Advances in Neural Information Processing Systems},
  volume={36},
  pages={30039--30069},
  year={2023}
}

@inproceedings{xu2024wizardlm,
  title={WizardLM: Empowering large pre-trained language models to follow complex instructions},
  author={Xu, Can and Sun, Qingfeng and Zheng, Kai and Geng, Xiubo and Zhao, Pu and Feng, Jiazhan and Tao, Chongyang and Lin, Qingwei and Jiang, Daxin},
  booktitle={The Twelfth International Conference on Learning Representations},
  year={2024}
}

@inproceedings{cui2024ultrafeedback,
  title={ULTRAFEEDBACK: Boosting Language Models with Scaled AI Feedback},
  author={Cui, Ganqu and Yuan, Lifan and Ding, Ning and Yao, Guanming and He, Bingxiang and Zhu, Wei and Ni, Yuan and Xie, Guotong and Xie, Ruobing and Lin, Yankai and others},
  booktitle={International Conference on Machine Learning},
  pages={9722--9744},
  year={2024},
  organization={PMLR}
}

@inproceedings{yu2024metamath,
  title={Metamath: Bootstrap your own mathematical questions for large language models},
  author={Yu, Longhui and Jiang, Weisen and Shi, Han and Yu, Jincheng and Liu, Zhengying and Zhang, Yu and Kwok, James and Li, Zhenguo and Weller, Adrian and Liu, Weiyang},
  booktitle={International Conference on Learning Representations},
  volume={2024},
  pages={45040--45061},
  year={2024}
}

@inproceedings{luo2024wizardcoder,
  title={Wizardcoder: Empowering code large language models with evol-instruct},
  author={Luo, Ziyang and Xu, Can and Zhao, Pu and Sun, Qingfeng and Geng, Xiubo and Hu, Wenxiang and Tao, Chongyang and Ma, Jing and Lin, Qingwei and Jiang, Daxin},
  booktitle={International Conference on Learning Representations},
  volume={2024},
  pages={27168--27188},
  year={2024}
}

@article{chen2021evaluating,
  title={Evaluating large language models trained on code},
  author={Chen, Mark},
  journal={arXiv preprint arXiv:2107.03374},
  year={2021}
}

@article{austin2021program,
  title={Program synthesis with large language models},
  author={Austin, Jacob and Odena, Augustus and Nye, Maxwell and Bosma, Maarten and Michalewski, Henryk and Dohan, David and Jiang, Ellen and Cai, Carrie and Terry, Michael and Le, Quoc and others},
  journal={arXiv preprint arXiv:2108.07732},
  year={2021}
}

@inproceedings{
kim2026distillation,
title={Distillation of Large Language Models via Concrete Score Matching},
author={Yeongmin Kim and Donghyeok Shin and Mina Kang and Byeonghu Na and Il-chul Moon},
booktitle={The Fourteenth International Conference on Learning Representations},
year={2026},
url={https://openreview.net/forum?id=bZBJFrxH1H}
}

@article{meng2022concrete,
  title={Concrete score matching: Generalized score matching for discrete data},
  author={Meng, Chenlin and Choi, Kristy and Song, Jiaming and Ermon, Stefano},
  journal={Advances in Neural Information Processing Systems},
  volume={35},
  pages={34532--34545},
  year={2022}
}
\bibliographystyle{iclr2026_conference}
\newpage
\appendix
\section{Definition and Proof}
\subsection{Amari's $\alpha$-divergence} \label{app: amari alpha divergence}
\paragraph{Definition.}
Let $\alpha \in \mathbb{R}$ be a real parameter.  
The $\alpha$-function $f_\alpha(u)$ is defined for $\alpha \neq \pm 1$ by
\begin{equation}
    f_\alpha(u)
    = \frac{4}{1-\alpha^{2}}
        \left( 1 - u^{\frac{1+\alpha}{2}} \right).
    \label{eq:alpha_function}
\end{equation}

Using this $\alpha$-function, the $\alpha$-divergence between two probability distributions $p = \{p_i\}$ and $q = \{q_i\}$ is defined as
\begin{equation}
    D_\alpha[p||q]
    = \frac{4}{1-\alpha^{2}}
        \left( 1 - \sum_i p_i^{\frac{1+\alpha}{2}}
                         q_i^{\frac{1-\alpha}{2}}
        \right),
    \qquad \alpha \neq \pm 1.
    \label{eq:alpha_divergence}
\end{equation}

The dual divergence is obtained by replacing $\alpha$ with $-\alpha$:
\begin{equation}
    D_\alpha[p||q]
    = D_{-\alpha}[q||p].
\end{equation}

We also define the limiting cases at $\alpha = \pm 1$ by continuity. When $\alpha = \pm 1$, the $\alpha$-function becomes
\begin{equation}
    f_\alpha(u)
    =
    \begin{cases}
        u \log u,  & \alpha = 1, \\
        - \log u,  & \alpha = -1.
    \end{cases}
\end{equation}

\paragraph{Connection to KL and reverse KL.}
Correspondingly, the $\alpha$-divergence reduces to
\begin{equation}
    D_\alpha[p||q]
    =
    \begin{cases}
        \sum_i q_i \log \dfrac{q_i}{p_i}, & \alpha = 1, \\
        \sum_i p_i \log \dfrac{p_i}{q_i}, & \alpha = -1,
    \end{cases}
\end{equation}
which correspond to the KL divergence ($\alpha = -1$) and the reverse KL divergence ($\alpha = 1$).

\paragraph{Relation to Rényi divergence.}
Since the Amari $\alpha$-divergence includes KL and reverse KL as the limiting cases, it can be viewed as a one-parameter generalization of the KL divergence. Moreover, it is closely related to the Rényi divergence. If we define a reparameterized order
\[
    \beta = \frac{1+\alpha}{2},
\]
then the quantity appearing in the Amari divergence,
\[
    \sum_i p_i^{\frac{1+\alpha}{2}} q_i^{\frac{1-\alpha}{2}},
\]
is exactly the same expression inside the discrete Rényi divergence,
\begin{equation}
    R_{\beta}[p||q]
    = \frac{1}{\beta - 1}
      \log \left( \sum_i p_i^{\beta} q_i^{1-\beta} \right).
\end{equation}
Thus, the Amari and Rényi divergences represent the same underlying one-parameter family through the invertible reparameterization
$\beta = (1+\alpha)/2$

\subsection{Proof of Proposition~\ref{prop: taid}} \label{proof: propTAID}
\propTAID*
\begin{proof}
Note that $p=\text{softmax}(\text{logit}(p))=\frac{1}{Z_{p}}\exp\!\left(\text{logit}(p)\right)$ where $\text{logit}(p)$ is logit of $p$ and $Z_p$ is normalization constant of $p$. 
Therefore, $\text{logit}(p) = \log\!\left(p \cdot Z_p\right)$.
\begin{align}
    r &= \text{softmax}\!\left(\lambda \text{logit}(p) + (1-\lambda) \text{logit}(q)\right) \\
      &= \frac{1}{Z_r} \exp\!\left(\lambda \text{logit}(p) + (1-\lambda) \text{logit}(q)\right) \\
      &= \frac{1}{Z_r} \exp\!\left(\lambda \log\!\left(p Z_p\right) + (1-\lambda) \log\!\left(q Z_q\right)\right) \\    
      &= \frac{1}{Z_r} \exp\!\left(\log\!\big(p^\lambda q^{1-\lambda}\big) + \log\!\big(Z_p^\lambda Z_q^{1-\lambda}\big)\right) \\
      &= \frac{Z_p^\lambda Z_q^{1-\lambda}}{Z_r}\, p^\lambda q^{1-\lambda} \\
      &= \frac{1}{Z'}\,p^\lambda q^{1-\lambda}, 
      \quad \text{where } Z' \coloneq \frac{Z_r}{Z_p^\lambda Z_q^{1-\lambda}}.
\end{align}

Now, it is sufficient to show that $Z'=\sum p^\lambda q^{1-\lambda}$:
\begin{align}
    Z' 
    &= \frac{1}{Z_p^\lambda Z_q^{1-\lambda}} 
       \sum \exp\!\left(\lambda \text{logit}(p) + (1-\lambda) \text{logit}(q)\right) \\
    &= \sum \left(\frac{1}{Z_p}\exp(\text{logit}(p))\right)^\lambda 
           \left(\frac{1}{Z_q}\exp(\text{logit}(q))\right)^{1-\lambda} \\
    &= \sum p^\lambda q^{1-\lambda}.
\end{align}
Therefore, $r_\theta \propto p^\lambda q^{1-\lambda}$ and it is a valid distribution with normalization.
\end{proof}
\subsection{Proof of Proposition~\ref{prop: continuity}} \label{proof: propConti}
\propConti*
\begin{proof}
We begin with the proof for the continuity of the unnormalized assistant distribution $\tilde{r}_\theta^{(\alpha,\lambda)}$. The $\alpha \neq 1$ case is trivial since it is a composition of continuous functions. For the $\alpha = 1$ case, it is a well-known fact that the power mean is a continuous function \citep{bullen2013handbook}. Especially, we can show that as follows:
\begin{align}
    \lim_{\alpha \to 1} \log \tilde{r}_\theta^{(\alpha,\lambda)}
    &= \lim_{\alpha \to 1} 
       \frac{2}{1-\alpha}\,
       \log\!\left( \lambda\, p^{\tfrac{1-\alpha}{2}} 
              + (1-\lambda)\, q^{\tfrac{1-\alpha}{2}} \right) \\[6pt]
    &= \lim_{\alpha \to 1} 
       \frac{\lambda\, p^{\tfrac{1-\alpha}{2}} \log p 
             + (1-\lambda)\, q^{\tfrac{1-\alpha}{2}} \log q}
            {\lambda\, p^{\tfrac{1-\alpha}{2}} 
             + (1-\lambda)\, q^{\tfrac{1-\alpha}{2}}} \\[6pt]
    &= \lambda \log p + (1-\lambda)\log q \\[6pt]
    &= \log\!\left(p^\lambda q^{\,1-\lambda}\right).
\end{align}

By the continuity of the exponential function, we can get $\lim\limits_{\alpha \rightarrow 1} \tilde{r}_\theta^{(\alpha,\lambda)} = p^\lambda q^{1-\lambda}$. We use L'H\^opital's rule in the second equality. Note that $Z = \sum\limits_{i} \tilde{r}_\theta^{(\alpha,\lambda)}(i)$ is continuous function w.r.t $\alpha$ since it is a finite sum of continuous functions w.r.t $\alpha$. Also, since $p$ and $q$ cannot be both zero, $\tilde{r}_\theta^{(\alpha,\lambda)}$ is not zero, so $Z > 0$. Therefore, the $r_\theta^{(\alpha,\lambda)} = \frac{1}{Z}\tilde{r}_\theta^{(\alpha,\lambda)}$ is contiuous function w.r.t. $\alpha$.
\end{proof}
\subsection{Proof of Theorem~\ref{them: optimality}} \label{proof: thmOpti}
\thmOpti*
\begin{proof}
We first prove that $D(p,r_\theta^{(\alpha,\lambda)})$ implies $p=q_\theta$.
By the definition of divergence, we have that $D(p,r_\theta^{(\alpha,\lambda)})=0$ if and only if $p=r_\theta^{(\alpha,\lambda)}$.
\paragraph{Case $\alpha=1$.} In this case, $r_\theta^{(\alpha,\lambda)} = \frac{1}{Z}p^\lambda q_{\theta}^{(1-\lambda)}$.
\begin{align}
    p = \frac{1}{Z}\, p^\lambda\, q_{\theta}^{\,1-\lambda} \Leftrightarrow\; Z\, p^{\,1-\lambda} = q_{\theta}^{\,1-\lambda} \Leftrightarrow\; Z^{\tfrac{1}{\,1-\lambda}}\, p = q_{\theta}
\end{align}
By integrating both sides, $Z^{\frac{1}{1-\lambda}}=1$ which implies $Z=1$. Therefore, $p = q_{\theta}$
\paragraph{Case $\alpha \neq 1$.} In this case, $r_\theta^{(\alpha,\lambda)} = \frac{1}{Z}\,\left\{\lambda\, p^{\tfrac{1-\alpha}{2}} + (1-\lambda)\, q_\theta^{\tfrac{1-\alpha}{2}} \right\}^{\tfrac{2}{\,1-\alpha}}$
\begin{align}
    p = \frac{1}{Z} \left\{\lambda\, p^{\tfrac{1-\alpha}{2}} + (1-\lambda)\, q_\theta^{\tfrac{1-\alpha}{2}} \right\}^{\tfrac{2}{\,1-\alpha}}
    &\Leftrightarrow\;
      Z^{\tfrac{1-\alpha}{2}}\, p^{\tfrac{1-\alpha}{2}}
      = \lambda\, p^{\tfrac{1-\alpha}{2}}
        + (1-\lambda)\, q_\theta^{\tfrac{1-\alpha}{2}} \\[6pt]
    &\Leftrightarrow\;
      \bigl(Z^{\tfrac{1-\alpha}{2}} - \lambda\bigr)\,
      p^{\tfrac{1-\alpha}{2}}
      = (1-\lambda)\, q_\theta^{\tfrac{1-\alpha}{2}} \\[6pt]
    &\Leftrightarrow\;
      C\, p^{\tfrac{1-\alpha}{2}}
      = q_\theta^{\tfrac{1-\alpha}{2}},
      \quad\text{where}\quad
      C \coloneq \frac{Z^{\tfrac{1-\alpha}{2}} - \lambda}{1-\lambda} \\[6pt]
    &\Leftrightarrow\;
      C^{\tfrac{2}{\,1-\alpha}}\, p = q_\theta
\end{align}

By integrating both sides, $C^{\frac{2}{1-\alpha}}=1$ which implies $C=1$. Therefore, $p = q_{\theta}$.

$D(q_\theta,r_\theta^{(\alpha,\lambda)})$ is similar.
\end{proof}
\subsection{Proof of Proposition~\ref{prop: grad}} \label{proof: propGrad}
\propGrad*
\begin{proof}
From the basic calculus, we can derive the following equation for fixed $p$:
\begin{align}
    \frac{\partial}{\partial r}\!\left[r\, f\!\left(\frac{p}{r}\right)\right] = f\!\left(\frac{p}{r}\right) - \frac{p}{r}\, f'\!\left(\frac{p}{r}\right) = \psi_f\!\left(\frac{p}{r}\right).
\end{align}
Hence, 
\begin{align}
    \nabla_{\theta} D_f(p||r_\theta^{(\alpha,\lambda)}) &= \sum\limits_{y_t \in \mathcal{V}} \psi_f\!\left(\frac{p\!\left(y_l \mid x,y_{<l}\right)}{r_\theta^{\!\left(\alpha,\lambda\right)}\!\left(y_l \mid x,y_{<l}\right)}\right) \cdot \nabla_{\theta} r_\theta^{\!\left(\alpha,\lambda\right)}\!\left(y_l \mid x,y_{<l}\right) \\
    &= \mathbb{E}_{r_\theta^{(\alpha,\lambda)}} \!\left[\psi_f\!\left(\frac{p}{r_\theta^{\!\left(\alpha,\lambda\right)}}\right) \cdot \nabla_{\theta} \log r_\theta^{(\alpha,\lambda)} \right] \label{eq: grad f-div}
\end{align}

Before deriving the gradient of the log probability of $r_\theta^{(\alpha,\lambda)}$, let us first derive $\nabla_\theta \tilde{r}_\theta^{(\alpha,\lambda)}$ as follows:
\begin{align}
    \nabla_{\theta} \tilde{r}_\theta^{(\alpha,\lambda)}
    &= \nabla_{\theta} \Bigl\{ h_{\alpha}^{-1}\!\bigl(\lambda\, h_{\alpha}(p) + (1-\lambda)\, h_{\alpha}(q_\theta)\bigr) \Bigr\} \\[6pt]
    &= \frac{1}{h_\alpha'\!\bigl(\tilde{r}_\theta^{(\alpha,\lambda)}\bigr)}\,(1-\lambda)\, \nabla_{\theta} h_\alpha(q_\theta) \\[6pt]
    &= \frac{1}{h_\alpha'\!\bigl(\tilde{r}_\theta^{(\alpha,\lambda)}\bigr)}\,(1-\lambda)\, h_\alpha'(q_\theta)\, \nabla_{\theta} q_\theta \\[6pt]
    &= \frac{1}{h_\alpha'\!\bigl(\tilde{r}_\theta^{(\alpha,\lambda)}\bigr)}\,(1-\lambda)\, h_\alpha'(q_\theta)\, q_\theta\, \nabla_{\theta}\log q_\theta \\[6pt]
    &= (1-\lambda)\,\frac{h_\alpha'(q_\theta)\,q_\theta}{h_\alpha'\!\bigl(\tilde{r}_\theta^{(\alpha,\lambda)}\bigr)\,\tilde{r}_\theta^{(\alpha,\lambda)}}\,\tilde{r}_\theta^{(\alpha,\lambda)}\, \nabla_{\theta}\log q_\theta \\[6pt]
    &= (1-\lambda)\,\Biggl(\frac{q_\theta}{\tilde{r}_\theta^{(\alpha,\lambda)}}\Biggr)^{\tfrac{1-\alpha}{2}}\,\tilde{r}_\theta^{(\alpha,\lambda)}\, \nabla_{\theta}\log q_\theta \\[6pt]
    &= \frac{(1-\lambda)\,q_\theta^{\tfrac{1-\alpha}{2}}}{\lambda\, p^{\tfrac{1-\alpha}{2}} + (1-\lambda)\, q_\theta^{\tfrac{1-\alpha}{2}}}\,\tilde{r}_\theta^{(\alpha,\lambda)}\, \nabla_{\theta}\log q_\theta \\[6pt]
    &= w \cdot \tilde{r}_\theta^{(\alpha,\lambda)} \cdot \nabla_{\theta}\log q_\theta \,.
\end{align}

Therefore, 
\begin{align}
    \nabla_{\theta} \log r_\theta^{(\alpha,\lambda)} &= \frac{\nabla_{\theta} \tilde{r}_\theta^{(\alpha,\lambda)}}{\tilde{r}_\theta^{(\alpha,\lambda)}} - \frac{1}{Z_r}\sum\limits_{k} \nabla_{\theta} \tilde{r}_\theta^{(\alpha,\lambda)}(k) \\[6pt]
    &= w \cdot \nabla_{\theta} \log q_\theta - \mathbb{E}_{r_\theta^{(\alpha,\lambda)}} \!\big[ w \cdot \nabla_{\theta} \log q_\theta \big] \label{eq: grad logr}
\end{align}

Lastly, placing~\cref{eq: grad logr} into~\cref{eq: grad f-div} and rearranging will yield the final result.
\begin{align}
    \nabla_{\theta} D_f(p||r_\theta^{(\alpha,\lambda)}) &= \mathbb{E}_{r_\theta^{(\alpha,\lambda)}} \!\left[\psi_f\!\left(\frac{p}{r_\theta^{\!\left(\alpha,\lambda\right)}}\right) \cdot \nabla_{\theta} \log r_\theta^{(\alpha,\lambda)} \right] \\[6pt]
    &= \mathbb{E}_{r_\theta^{(\alpha,\lambda)}} \!\left[\psi_f\!\left(\frac{p}{r_\theta^{\!\left(\alpha,\lambda\right)}}\right) \cdot \Big\{ w \cdot \nabla_{\theta} \log q_\theta - \mathbb{E}_{r_\theta^{(\alpha,\lambda)}} \!\big[ w \cdot \nabla_{\theta} \log q_\theta \big]\Big\} \right] \\[6pt]
    &= \mathbb{E}_{r_\theta^{(\alpha,\lambda)}} \!\left[ w \cdot \Bigg\{\psi_f\!\left(\frac{p}{r_\theta^{(\alpha,\lambda)}}\right) - \mathbb{E}_{r_\theta^{(\alpha,\lambda)}}\!\left[\psi_f\bigg(\frac{p}{r_\theta^{(\alpha,\lambda)}}\bigg)\right]\Bigg\} \cdot \nabla_\theta \log{q_\theta} \right]
\end{align}
\end{proof}
\section{Experimental Details} \label{app: exp_detail}
\subsection{Toy experiment}
We employ a two-modal Gaussian mixture for the teacher $p=0.7\mathcal{N}(-3,2)+0.3\mathcal{N}(3, 0.8)$ and a unimodal Gaussian for the student $q_\theta = \mathcal{N}(\mu,\sigma^2)$ with $\mu_0=0,\sigma_0^2=1$. We optimize $\theta=\{\mu, \sigma^2 \}$ by minizing $D_{\text{KL}}(p\|r_\theta^{(\alpha,\lambda)})$ with Adam optimizer~\citep{kingma2014adam} with 5000 steps and 5e-2 learning rate.

\subsection{Datasets}
\begin{itemize}
    \item \textbf{databricks-dolly-15k}~\citep{conover2023free}: An open-source dataset of instruction–response pairs created by thousands of Databricks employees. It covers diverse behavioral categories defined in ~\cite{ouyang2022training}, including brainstorming, classification, closed QA, generation, information extraction, open QA, and summarization.
    \item \textbf{Self-instruct}~\citep{wang-etal-2023-self-instruct}: A framework for improving instruction-following ability by iteratively using model outputs to generate new instructional data. The dataset contains 52K instructions and 82K input–output pairs for tuning, 252 expert-written tasks for practical evaluation, and 50K additional examples from public datasets for benchmarking.
    \item \textbf{Vicuna}~\citep{vicuna2023}: A benchmark consisting of 80 challenging open-ended questions originally used to assess Vicuna. It provides a compact but difficult testbed for evaluating instruction-following performance.
    \item \textbf{Super-Natural Instructions}~\citep{wang-etal-2022-super}: A large-scale benchmark comprising 1,616 expert-written NLP tasks spanning 76 task categories. Its test set includes 9K examples drawn from 119 tasks, covering a wide spectrum of instruction types.
    \item \textbf{Unnatural Instructions}~\citep{honovich-etal-2023-unnatural}: An AI-generated dataset containing 240K instructions created with minimal human intervention. The collection demonstrates that synthetic data can serve as an effective substitute for human-curated data. Its core subset includes 60K examples.
    \item \textbf{AlpacaEval}~\citep{dubois2023alpacafarm}: AlpacaEval is derived from the AlpacaFarm evaluation suite but includes simplified formatting. In particular, \cite{dubois2023alpacafarm} combined the original instruction and input fields into one unified instruction, a change that influences about a quarter of the samples sourced from Self-Instruct~\citep{wang-etal-2023-self-instruct}. The dataset ultimately comprises 805 difficult instruction-following queries.
    \item \textbf{Evol-Instruct Evaluation}~\citep{xu2024wizardlm}: This evaluation set includes 218 prompts produced through the Evol-Instruct generation pipeline. The questions span a wide range of topics and serve as a compact benchmark for instruction-following ability.
    \item \textbf{UltraFeedback}~\citep{cui2024ultrafeedback}: UltraFeedback is a large and detailed preference dataset tailored for building high-quality reward and critic models. It contains roughly 64k prompts collected from UltraChat, ShareGPT, and Evol-Instruct. Each prompt was used to elicit four responses from different LLMs, and GPT-4 subsequently annotated these outputs based on dimensions such as following instructions, factual correctness, honesty, and helpfulness.
    \item \textbf{MetaMathQA}~\citep{yu2024metamath}: MetaMathQA was created to strengthen mathematical reasoning in LLMs. The dataset is generated through a bootstrapping method in which each math question is re-expressed using multiple reasoning viewpoints, including forward reasoning, backward reasoning, and paraphrased formulations.
    \item \textbf{GSM8K}~\citep{cobbe2021training}: GSM8K consists of 8.5K meticulously written grade-school math word problems. Designed to require multi-step inference, it is widely used as a standard benchmark for evaluating basic mathematical reasoning skills.
    \item \textbf{WizardCoder}~\citep{luo2024wizardcoder}: WizardCoder is an instruction-tuned code dataset built via the Evol-Instruct method. Starting with the 20K-sample Code Alpaca corpus, the creators iteratively evolved prompts by increasing complexity, adding constraints, inserting misleading code, and introducing time/space complexity requirements. The final dataset contains about 78K evolved examples, which were used to fine-tune StarCoder and substantially improve its coding performance.
    \item \textbf{HumanEval}~\citep{chen2021evaluating}: HumanEval provides 164 hand-crafted programming tasks with function signatures, natural-language descriptions, and unit tests. It is one of the primary benchmarks for assessing code generation and was explicitly designed to avoid overlap with existing training data.
    \item \textbf{MBPP}~\citep{austin2021program}: MBPP includes roughly 1,000 Python programming exercises aimed at novice programmers. Each problem comes with a textual description, a reference code solution, and three automatic test cases. Portions of the dataset were manually validated to ensure consistency and correctness.
\end{itemize}

\subsection{Implementation Settings} \label{app: implementation}
\paragraph{Training.}
For instruction-following distillation, we use databricks-dolly-15K~\citep{conover2023free} for the distillation loss and OpenWebText~\citep{Gokaslan2019OpenWeb} for the pretraining loss. Teacher models include GPT-2 XL (1.5B) with SFT, and students are GPT-2 (0.1B), GPT-2 Medium (0.3B), and GPT-2 Large (0.8B). To test scalability, OpenLLaMA2-7B~\citep{openlm2023openllama} is distilled into OpenLLaMA2-3B using LoRA.

For task-specific evaluation, we use Flores-200~\citep{costa2022no} for translation, DialogSum~\citep{chen-etal-2021-dialogsum} for summarization, and GSM8K~\citep{cobbe2021training} for mathematical reasoning. Teacher models are Gemma-7B-It~\citep{team2024gemma} and Qwen2-7B-Instruct~\citep{team2024qwen2}, while Gemma-2B-It and Qwen2-0.5B-Instruct serve as students. Teachers are fine-tuned on the full dataset, while students are trained with about 1,000 samples.

We use $\alpha_{AB}$-$\beta_{AB}$-divergence with $\alpha_{AB}=0.2,\beta_{AB}=0.7$ and adopt adaptive off-policy training~\citep{ko2024distillm} as default. We explore $\alpha$ over the range $\{-5, -3, -1, -0.5, 0, 0.5, 1.0\}$ and employ $\lambda = 0.1$ as default by following prior work~\citep{ko2024distillm}. We utilize the AdamW optimizer and cosine learning rate scheduling by following the previous work~\citep{ko2024distillm,wang2025abkd}. We search the learning rate over the range $\{ 0.0005, 0.0001, 0.00005 \}$ except for the cooperation with contrastive-based distillation experiments, which use the default setting of DistiLLM-2~\citep{ko2025distillm}.

\paragraph{Theoretical insights based $\alpha$ tuning guidelines.}
Herein, we provide principled tuning guidelines grounded in the theoretical properties of $\alpha$. First, in Section \ref{sec: amad}, we show that the support of the assistant distribution varies with $\alpha$: $supp(r_\theta^{(\alpha,\lambda)}) = supp(p) \cup supp(q_\theta)$ when $\alpha < 1$, $supp(r_\theta^{(\alpha,\lambda)}) = supp(p) \cap supp(q_\theta)$ when $\alpha \geq 1$. In KD for LLMs, the teacher and student often exhibit a capacity gap and produce high-dimensional outputs with many near-zero probabilities. As a result, they do not share a sufficiently large common support in general. For this reason, we recommend using $\alpha < 1$ in most practical settings, as this choice improves training stability and enables more reliable knowledge transfer.

Furthermore, our gradient analysis and (toy) experiments demonstrate that $\alpha$ directly controls the trade-off between mode-covering and mode-seeking behavior of the optimized student. Under the $\alpha < 1$, increasing $\alpha$ encourages relatively stronger mode covering, thereby improving output diversity. Conversely, smaller $\alpha$ emphasizes mode seeking, which enhances fidelity to the teacher. Therefore, for enhancing the teacher–student alignment and performance, we suggest using small $\alpha$ values. However, since too small $\alpha$ can induce high curvature in the geometry of the interpolation path, which may reduce optimization efficiency, so such choices should be used with caution.

Based on these theoretical insights, we explore $\alpha$ over the range $\{-5, -3, -1, -0.5, 0, 0.5, 1.0\}$. Table~\ref{tab: hyperparams} provides the detailed configuration for each task.

\begin{table}[h!]
\caption{Configuration of hyperparameters for AMiD}
\label{tab: hyperparams}
\centering
\resizebox{\textwidth}{!}{
\begin{tabular}{l l l l l r}
\toprule\toprule
\textbf{Task} & \textbf{Dataset} & \textbf{Teacher} & \textbf{Student} & \textbf{Divergence} & $\alpha$ \\
\midrule

\multirow{9}{*}{Instruction following} & \multirow{7}{*}{databricks-dolly-15k} & \multirow{6}{*}{GPT-2 XLarge} & \multirow{4}{*}{GPT-2 Base} & $D_{KL}(p||r)$ & -5.0 \\
& & & & $D_{RKL}(p||r)$ & -3.0 \\
& & & & $D_{AB}(p||r)$ & -5.0 \\
& & & & $D_{KL}(q||r)$ & 0.5 \\
\cmidrule{4-6}
& & & GPT-2 Medium & $D_{AB}(p||r)$ & -5.0 \\
& & & GPT-2 Large & $D_{AB}(p||r)$ & -3.0 \\
\cmidrule{3-6}
& & OpenLLaMA2-7B & OpenLLaMA2-3B & $D_{AB}(p||r)$ & -3.0 \\
\cmidrule{2-6}
& \multirow{2}{*}{UltraChat200k} & Qwen2.5-7B-Instruct & Qwen2.5-1.5B-Instruct & $D_{DistilLLM-2}$ & -5.0 \\
& & Qwen2.5-14B-Instruct & Qwen2.5-1.5B-Instruct & $D_{DistilLLM-2}$ & -5.0 \\

\midrule
\multirow{2}{*}{Translation}
& \multirow{2}{*}{Flores-200} & Gemma-7B-It & Gemma-2B-It & $D_{AB}(p||r)$ & -0.5 \\
& & Qwen2-7B-Instruct & Qwen2-0.5B-Instruct & $D_{AB}(p||r)$ & 0.5 \\

\midrule
\multirow{2}{*}{Summarization}
& \multirow{2}{*}{DialogSum} & Gemma-7B-It & Gemma-2B-It & $D_{AB}(p||r)$ & 0.5 \\
& & Qwen2-7B-Instruct & Qwen2-0.5B-Instruct & $D_{AB}(p||r)$ & -3.0 \\

\midrule
\multirow{3}{*}{Mathematical reasoning} & \multirow{2}{*}{GSM8k} & Gemma-7B-It & Gemma-2B-It & $D_{AB}(p||r)$ & 0.5 \\
& & Qwen2-7B-Instruct & Qwen2-0.5B-Instruct & $D_{AB}(p||r)$ & -3.0 \\
& MetaMathQA & Qwen2.5-Math-7B-Instruct & Qwen2.5-Math-1.5B-Instruct & $D_{DistilLLM-2}$ & -5.0 \\

\midrule
Code generation & WizardCoder & Qwen2.5-Coder-7B-Instruct & Qwen2.5-Coder-1.5B-Instruct & $D_{DistilLLM-2}$ & -5.0 \\
\bottomrule\bottomrule
\end{tabular}}
\end{table}

\paragraph{Evaluation.}
For evaluating generation quality, we adopt ROUGE-L~\citep{lin-2004-rouge} and Self-BLEU~\citep{zhu2018texygen}. ROUGE-L measures the similarity between the generated output and the reference text by computing the Longest Common Subsequence (LCS). Specifically, recall and precision are defined as
\begin{align}
    R_{\text{LCS}}=\frac{LCS(x,y)}{L_x}, P_{\text{LCS}}=\frac{LCS(x,y)}{L_y},
\end{align}
where $LCS(x,y)$ is the length of the longest common subsequence between the reference $x$ and the generated text $y$, and $L_x$, $L_y$ denote their respective lengths. The final ROUGE-L score is given by the harmonic mean:
\begin{align}
    ROUGE\mbox-L=\frac{2\cdot R_{\text{LCS}}\cdot P_{\text{LCS}}}{R_{\text{LCS}}+P_{\text{LCS}}}.
\end{align}
A higher ROUGE-L score indicates that the generated text more closely matches the reference in terms of sequence overlap.

Self-BLEU evaluates the diversity of generated outputs by leveraging the BLEU metric~\citep{papineni2002bleu}. BLEU computes the geometric mean of modified $n$-gram precisions with a brevity penalty (BP):
\begin{align}
    \text{BP} =
    \begin{cases}
    1 & \text{if } c > r, \\
    e^{(1-r/c)} & \text{if } c \leq r,
    \end{cases} \\
    \text{BLEU}(c,R) = \text{BP}\cdot \exp{\left(\sum_{n=1}^{N}w_n\log p_n(c,R)\right)},
\end{align}
where $c$ is the candidate length, $r$ is the effective reference length, $p_n(c,R)$ denotes the modified $n$-gram precision, and $w_n$ are positive weights summing to one. 
Building on this definition, Self-BLEU is calculated by treating each generated sample $s_i$ as the hypothesis and the remaining set $S \backslash \{s_i\}$ as references:
\begin{align}
    \text{Self-BLEU}(S) = \frac{1}{M}\sum_{i=1}^{M} \text{BLEU}(s_i,S \backslash \{s_i\}).
\end{align}
A higher Self-BLEU score (close to 1) indicates that the outputs are highly similar to each other, reflecting low diversity and more deterministic behavior, while a lower score (close to 0) suggests greater diversity across generations.
\section{Additional Experimental Results and Discussions} \label{app: exp_additional}
\subsection{More comparison with baselines} \label{app: exp_additional_more}
Table~\ref{tab: gpt2_more} presents the complete results on the GPT-2 family, where we extend the comparison to a broader set of baseline methods beyond those reported in the main paper. We observe that AMiD consistently outperforms all competing approaches across different student sizes (0.1B, 0.3B, 0.8B), further validating the robustness of our method.
In particular, while methods such as SeqKD, ImitKD, MiniLLM, and AKL yield modest improvements over standard knowledge distillation (KD), they still fall short of strong assistant-based methods like GKD, TAID, and DistiLLM. Among these baselines, ABKD often emerges as the strongest competitor. Nevertheless, AMiD achieves clear performance gains over ABKD in nearly every evaluation setting.

\subsection{Results on OpenLLaMA2} \label{app: exp_additional_openllama}
Table~\ref{tab: openllama} reports results on the OpenLLaMA2 family, where a 7B teacher is distilled into a 3B student. Consistent with our findings on the GPT-2 series, AMiD achieves the best overall performance across most evaluation benchmarks. In particular, AMiD surpasses prior assistant-based approaches such as TAID and DistiLLM (both SKL and SRKL variants), as well as the strong baseline ABKD.
\begin{table*}[h]
    \centering
    \caption{ROUGE-L scores ($\uparrow$) on OpenLLaMA2-7B $\to$ OpenLLaMA2-3B. \textbf{Bold} and \underline{Underline} mean the best and second-best performance of each column, except the teacher, respectively. All results are based on our own re-implementation. We conduct the evaluation with five random seeds.}
    \label{tab: openllama}
    \resizebox{\textwidth}{!}{%
    \begin{tabular}{l |c| ccccc |c}
    \toprule\toprule
    Model & Val. ($\uparrow$) 
      & Dolly Eval ($\uparrow$) 
      & Self Inst ($\uparrow$)
      & Vicuna ($\uparrow$)
      & Super NI ($\uparrow$)
      & UnNI ($\uparrow$)
      & Avg. ($\uparrow$) \\
    \midrule
      Teacher & $-$ & 27.60 \stdpm{0.34} & 18.17 \stdpm{0.80} & 17.85 \stdpm{0.48} & 31.05 \stdpm{0.31} & 32.40 \stdpm{0.28} & 25.41 \\
    \midrule
    \multicolumn{8}{l}{\textbf{\textit{OpenLLaMA2-7B $\rightarrow$ OpenLLaMA2-3B}}} \\
    \midrule
       TAID      & 30.85 & 26.53 \stdpm{0.23} & 17.73 \stdpm{0.69} & 18.14 \stdpm{0.39} & 31.93 \stdpm{0.23} & 31.55 \stdpm{0.12} & 25.18 \\
       DistiLLM (SKL) & 33.07 & 28.63 \stdpm{0.28} & 20.20 \stdpm{0.66} & 19.15 \stdpm{0.32} & 35.31 \stdpm{0.19} & 34.74 \stdpm{0.10} & 27.61 \\
       DistiLLM (SRKL) & 33.18 & 28.83 \stdpm{0.41} & \underline{20.76} \stdpm{0.37} & 19.37 \stdpm{0.15} & 36.82 \stdpm{0.14} & 35.76 \stdpm{0.13} & 28.31 \\
       ABKD      & \underline{33.91} & \underline{29.43} \stdpm{0.42} & 20.46 \stdpm{0.28} & \underline{20.42} \stdpm{0.12} & \textbf{39.51} \stdpm{0.25} & \textbf{38.07} \stdpm{0.08} & \underline{29.58} \\
       \textbf{AMiD (Ours)} & \textbf{34.39} & \textbf{29.69} \stdpm{0.47} & \textbf{20.99} \stdpm{0.37} & \textbf{21.03} \stdpm{0.40} & \underline{39.06} \stdpm{0.21} & \underline{37.31} \stdpm{0.11} & \textbf{29.62} \\
    \bottomrule\bottomrule
    \end{tabular}}
\end{table*}

\subsection{Compatibility with Divergences $D_{\text{KL}}(q_\theta \| r_\theta^{(\alpha,\lambda)})$} \label{app: exp_fkl_qr}
Table~\ref{tab: divergence and alpha qr} provides the complementary results when employing the divergence $D_{\text{KL}}(q_\theta \| r_\theta^{(\alpha,\lambda)})$, contrasting the student distribution against the $\alpha$-mixture assistant. Similar to the findings in the main text (Table~\ref{tab: divergence and alpha}), AMiD consistently outperforms the no-assistant baseline across most evaluation benchmarks, confirming that the proposed method is broadly compatible with different divergence directions.
\begin{table*}[h]
    \centering
    \caption{ROUGE-L scores ($\uparrow$) with $D_{\text{KL}}(q_\theta \| r_\theta^{(\alpha,\lambda)})$ and various $\alpha$. We utilize GPT-2 XL (1.5B) $\to$ GPT-2 (0.1B). We use a fixed $\lambda = 0.9$ for these experiments.}
    \label{tab: divergence and alpha qr}
    \resizebox{\textwidth}{!}{%
    \begin{tabular}{l |l |c| ccccc |c}
    \toprule\toprule
    Divergence $D$ & Assistant $r_\theta^{(\alpha,\lambda)}$ & Val. ($\uparrow$) & Dolly Eval ($\uparrow$) & Self Inst ($\uparrow$) & Vicuna ($\uparrow$) & Super NI ($\uparrow$) & UnNI ($\uparrow$) & Avg. ($\uparrow$) \\
    \midrule
    \multirow{8}{*}{$D_{\text{KL}}(q_\theta \| r_\theta^{(\alpha,\lambda)})$} 
    & $q_\theta$ & 28.71 & 26.22 \stdpm{0.35} & 12.57 \stdpm{0.16} & \underline{16.97} \stdpm{0.34} & 24.75 \stdpm{0.20} & 26.59 \stdpm{0.14} & 21.42 \\
    & AMiD ($\alpha=-5.0$) & 27.54 & 24.23 \stdpm{0.23} & 12.59 \stdpm{0.32} & 15.80 \stdpm{0.43} & 24.50 \stdpm{0.14} & 26.38 \stdpm{0.07} & 20.70 \\
    & AMiD ($\alpha=-3.0$) & 27.96 & 25.13 \stdpm{0.29} & 12.80 \stdpm{0.48} & 16.32 \stdpm{0.45} & 25.54 \stdpm{0.30} & 26.86 \stdpm{0.14} & 21.33 \\
    & AMiD ($\alpha=-1.0$) & 28.21 & 25.74 \stdpm{0.20} & 12.13 \stdpm{0.23} & 16.34 \stdpm{0.15} & 25.40 \stdpm{0.10} & 26.91 \stdpm{0.12} & 21.30 \\
    & AMiD ($\alpha=-0.5$) & \underline{28.89} & 26.01 \stdpm{0.32} & 12.84 \stdpm{0.59} & \textbf{17.04} \stdpm{0.05} & \textbf{27.43} \stdpm{0.14} & 27.59 \stdpm{0.03} & \underline{22.18} \\
    & AMiD ($\alpha=0.0$)  & 28.84 & \textbf{26.70} \stdpm{0.33} & \underline{13.36} \stdpm{0.31} & 15.95 \stdpm{0.36} & 26.23 \stdpm{0.17} & \underline{27.70} \stdpm{0.10} & 21.99 \\
    & \textbf{AMiD (\boldmath$\alpha=-0.5$)}  & \textbf{29.02} & \underline{26.48} \stdpm{0.17} & \textbf{13.73} \stdpm{0.44} & 16.78 \stdpm{0.30} & \underline{26.78} \stdpm{0.34} & \textbf{28.65} \stdpm{0.10} & \textbf{22.48} \\
    & AMiD ($\alpha=1.0$)  & 28.40 & 26.01 \stdpm{0.34} & 12.03 \stdpm{0.33} & 16.96 \stdpm{0.31} & 24.84 \stdpm{0.24} & 27.01 \stdpm{0.11} & 21.37 \\
    \bottomrule\bottomrule
    \end{tabular}}
\end{table*}

\subsection{Overlap-based Adaptive $\alpha$ scheduling} \label{sec: adaptive alpha}

The theoretical insights based $\alpha$ tuning guidelines efficiently exclude low potential candidates. However, applying a single fixed global $\alpha$ value can still be sub-optimal in certain cases.

To address this concern, we introduce a curriculum-based adaptive $\alpha$ scheduling based on the degree of overlap between token-level teacher distribution $p(y_l | y_{<l}, x)$ and student distribution $q_\theta(y_l | y_{<l}, x)$. The intuition is that when the teacher and student distributions are highly overlapped, we encourage mode-covering to align further, whereas when the overlap is low, we enhance mode-seeking to find the mode first.

We define token-level overlap as $ovl_{i,l} := \sum_{y_l} \min(p(y_l | y_{<l}, x), q_\theta(y_l | y_{<l}, x))$. Obviously, the overlap value $ovl_{i,l}$ is bounded into $[0, 1]$. Also, $ovl_{i,l}$ approaches $0$ when $p(y_l | y_{<l}, x)$ and $q_\theta(y_l | y_{<l}, x)$ significantly differ, and approaches $1$ when they are well-aligned. Furthermore, $ovl_{i,l}$ can be expressed in terms of total variation distance $1-TVD(p(y_l | y_{<l}, x), q_\theta(y_l | y_{<l}, x))$, which provides same interpretation.

Given the predefined $\alpha_{min}$ and $\alpha_{max}$, we set the token-level $\alpha_{i,l}$ as the linearly increasing value along the line passing through $(0, \alpha_{min})$ and $(1, \alpha_{max})$ i.e. $\alpha_{i,l} \leftarrow (\alpha_{max} - \alpha_{min}) * ovl_{i,l} + \alpha_{min}$. Under this idea, when the teacher and student distributions differ substantially, the $ovl_{i,l}$ becomes small, leading to a smaller assigned $\alpha$, which strengthens mode-seeking. Conversely, when the teacher and student distributions are similar, both $ovl_{i,l}$ and $\alpha$ have larger values, thereby reinforcing mode-covering. This mechanism systematically determines $\alpha$ by combining the degree of alignment between the teacher and student distribution, which continuously changes through training, with the theoretical characteristics of $\alpha$.

\begin{table}[h]
\caption{ROUGE-L scores ($\uparrow$) on five task-agnostic instruction-following datasets with fixed $\alpha$ versus adaptive $\alpha$ scheduling. \textbf{Bold} means the best performance of each column. We use $D_{AB}$ and $\lambda=0.1$ for AMiD.}
\label{tab: adaptive scheduling}
\centering
\footnotesize
\resizebox{\textwidth}{!}{%
\begin{tabular}{l |c| ccccc |c}
\toprule
\toprule
Assistant &
Val. (↑)  &
Dolly Eval (↑) &
Self Inst (↑) &
Vicuna (↑) &
Super NI (↑) &
UnNI (↑) &
Avg. (↑) \\
\midrule
AMiD (Fixed $\alpha$) & 29.24 & 26.44 & 13.74 & 16.76 & 29.71 & 30.35 & 23.40 \\
AMiD (Adaptive $\alpha$) & 29.31 & 26.50 & 14.02 & 16.59 & 29.87 & 30.60 & \textbf{23.52} \\
\bottomrule
\bottomrule
\end{tabular}}
\end{table}

\subsection{Robustness to Optimizer}
To investigate whether the effectiveness of AMiD depends on a particular optimization setup, we evaluate its performance under the Lion optimizer. Table~\ref{tab: lion optimizer} exhibits the robustness of AMiD w.r.t. the optimizer.
\begin{table}[h]
\caption{ROUGE-L scores ($\uparrow$) on five task-agnostic instruction-following datasets under the Lion optimizer across different $\alpha$-mixture configurations. \textbf{Bold} means the best performance of each column. We use $D_{AB}$ and $\lambda=0.1$ for AMiD.}
\label{tab: lion optimizer}
\centering
\footnotesize
\resizebox{\textwidth}{!}{%
\begin{tabular}{ll|c|ccccc|c}
\toprule
\toprule
 & Assistant & Val. ($\uparrow$) &
Dolly Eval ($\uparrow$) &
Self Inst ($\uparrow$) &
Vicuna ($\uparrow$) &
Super NI ($\uparrow$) &
UnNI ($\uparrow$) &
Avg. ($\uparrow$) \\
\midrule
\multirow{4}{*}{Lion}
& No assistant & 28.38 & 26.02 \stdpm{0.28} & 12.19 \stdpm{0.34} & 17.24 \stdpm{0.37} & 26.29 \stdpm{0.19} & 28.88 \stdpm{0.10} & 22.12 \\
& AMiD ($\alpha = -1$) & 28.68 & 25.29 \stdpm{0.16} & 12.21 \stdpm{0.25} & 17.81 \stdpm{0.36} & 24.82 \stdpm{0.13} & 27.87 \stdpm{0.08} & 21.60 \\
& AMiD ($\alpha = +1$) & 24.98 & 22.33 \stdpm{0.27} & 9.50 \stdpm{0.28} & 15.50 \stdpm{0.52} & 15.71 \stdpm{0.34} & 18.01 \stdpm{0.08} & 16.21 \\
& AMiD ($\alpha = \pm1$) & 27.85 & 26.14 \stdpm{0.21} & 12.55 \stdpm{0.12} & 17.25 \stdpm{0.48} & 28.28 \stdpm{0.26} & 29.69 \stdpm{0.06} & \textbf{22.78} \\
\bottomrule
\bottomrule
\end{tabular}}
\end{table}

\subsection{Robustness to Learning rate scheduling}
To further assess the robustness of AMiD to optimization hyperparameters, we evaluate its performance under the Noam learning rate schedule, originally designed for transformer architectures. Table~\ref{tab: noam scheduling} also support the robustness of AMiD.
\begin{table}[h]
\caption{ROUGE-L scores ($\uparrow$) on five task-agnostic instruction-following datasets under the Noam learning rate schedule across different $\alpha$-mixture configurations. \textbf{Bold} means the best performance of each column. We use $D_{AB}$ and $\lambda=0.1$ for AMiD.}
\label{tab: noam scheduling}
\centering
\footnotesize
\resizebox{\textwidth}{!}{%
\begin{tabular}{ll|c|ccccc|c}
\toprule
\toprule
 & Assistant & Val. ($\uparrow$)&
Dolly Eval ($\uparrow$) &
Self Inst ($\uparrow$) &
Vicuna ($\uparrow$) &
Super NI ($\uparrow$) &
UnNI ($\uparrow$) &
Avg. ($\uparrow$) \\
\midrule

\multirow{4}{*}{Noam}
& No assistant & 28.42 & 25.83 \stdpm{0.17} & 13.35 \stdpm{0.54} & 16.43 \stdpm{0.28} & 28.30 \stdpm{0.24} & 29.90 \stdpm{0.11} & 22.76 \\
& AMiD ($\alpha = -1$) & 28.91 & 26.02 \stdpm{0.34} & 14.07 \stdpm{0.25} & 17.09 \stdpm{0.19} & 27.78 \stdpm{0.11} & 29.49 \stdpm{0.04} & 22.93 \\
& AMiD ($\alpha = +1$) & 28.33 & 25.59 \stdpm{0.25} & 13.61 \stdpm{0.49} & 16.25 \stdpm{0.34} & 26.42 \stdpm{0.20} & 28.26 \stdpm{0.19} & 22.09 \\
& AMiD ($\alpha = \pm1$) & 29.39 & 26.12 \stdpm{0.35} & 13.07 \stdpm{0.52} & 16.53 \stdpm{0.46} & 29.06 \stdpm{0.14} & 30.86 \stdpm{0.09} & \textbf{23.19} \\
\bottomrule
\bottomrule
\end{tabular}}
\end{table}

\subsection{Mitigate the conflict via Temperature Scaling}
As discussed in Section~\ref{sec: amad}, combining an assistant distribution with a narrow support and a divergence that requires the expectation w.r.t. the assistant distribution can lead to training instability and poor knowledge transfer.

Since the primary cause of this issue is the narrow support of the assistant distribution, we conjecture that applying distribution softening technique could alleviate the instability even for such problematic combinations. To verify this conjecture, we employ temperature $T > 1$, which is a widely used flattening technique in various area. The table below shows the performance when using various temperature values under $D_{RKL}(p || r_\theta^{(\alpha,\lambda)})$ with $\alpha = 1$. The results exhibit that introducing the temperature leads to stable training and can even yield strong performance with an appropriately chosen temperature value. However, large temperature causes an over-flattening effect, inducing large shift in the assistant distribution and consequently degrading performance. Overall, we demonstrate that temperature scaling can empirically mitigate the instability associated with problematic combinations under the appropriate temperature value.
\begin{table}[h]
\caption{ROUGE-L scores ($\uparrow$) on five task-agnostic instruction-following datasets under $D_{\text{RKL}}(p||r_{\theta}^{(\alpha,\lambda)})$ with $\alpha = 1$ when applying temperature scaling to soften the assistant distribution.}
\label{tab: temparature}
\centering
\footnotesize
\resizebox{\textwidth}{!}{%
\begin{tabular}{l |c| c c c c c |c}
\toprule
\toprule
Assistant &
Val. ($\uparrow$)&
Dolly Eval ($\uparrow$) &
Self Inst ($\uparrow$) &
Vicuna ($\uparrow$) &
Super NI ($\uparrow$) &
UnNI ($\uparrow$) &
Avg. ($\uparrow$) \\
\midrule
AMiD ($\alpha = 1.0, T = 1.0$) & 0.16 & 4.27 & 2.81 & 9.12 & 1.64 & 1.84 & 3.94 \\
AMiD ($\alpha = 1.0, T = 1.5$) & 27.40 & 24.61 & 11.55 & 17.11 & 21.41 & 23.39 & 19.61 \\
AMiD ($\alpha = 1.0, T = 2.0$) & 28.64 & 26.83 & 12.74 & 17.62 & 23.56 & 27.01 & 21.55 \\
AMiD ($\alpha = 1.0, T = 5.0$) & 28.78 & 26.78 & 12.43 & 17.30 & 25.87 & 27.74 & 22.02 \\
AMiD ($\alpha = 1.0, T = 10.0$) & 27.33 & 24.73 & 12.11 & 16.95 & 23.65 & 26.80 & 20.85 \\
\bottomrule
\bottomrule
\end{tabular}}
\end{table}

\section{Discussion of Optimality}
Theorem~\ref{them: optimality} guarantees the optimality of AMiD, yet experimentally demonstrated extremely poor performance for the reverse KL divergence $D_{\text{RKL}}(p \| r_\theta^{(\alpha,\lambda)})$ and $\alpha=1$ in Table~\ref{tab: divergence and alpha}. We conjecture that it is caused by the conflict between RKL and the support intersection property, which leads to instability. RKL includes the expectation of the assistant distribution $\mathbb{E}_{r_{\theta}^{(\alpha,\lambda)}}[\cdot]$ by definition. However, when $\alpha=1$, since $\text{supp}(r_\theta^{(\alpha,\lambda)})$ is $\text{supp}(p)\cap\text{supp}(q_\theta)$ (see Section~\ref{sec: amad}), $\mathbb{E}_{r_{\theta}^{(\alpha,\lambda)}}[\cdot]$ is conducted on an unstable and narrow region, and this phenomenon intensifies further in the early stages of optimization. In addition, we experimentally find that the combination of $D_{\text{RKL}}(p \| r_\theta^{(\alpha,\lambda)})$ and $\alpha=1$ produces highly unstable loss and gradient within a few early steps. In conclusion, while AMiD theoretically guarantees optimality, it might be necessary to employ appropriate divergence and alpha values, taking into account the imperfect optimization.

\section{The Use of Large Language Models (LLMs)}
We employed the LLM to polish the paper writing. Specifically, it was used to request grammatical corrections once the author had drafted the text. 

\begin{table}[t]
\caption{ROUGE-L scores ($\uparrow$) across different combinations of $\lambda$ and the $\alpha$-mixture assistant distribution.}
\label{tab: lambda and alpha}
\centering
\footnotesize
\resizebox{\textwidth}{!}{%
\begin{tabular}{l l |c| c c c c c |c}
\toprule
\toprule
\multirow{2}{*}{$\lambda$}  & Assistant &
Val. ($\uparrow$)&
Dolly Eval ($\uparrow$) &
Self Inst ($\uparrow$) &
Vicuna ($\uparrow$) &
Super NI ($\uparrow$) &
UnNI ($\uparrow$) &
Avg. ($\uparrow$) \\
\cmidrule(lr){2-9}
 & No Assistant & 28.61 & 25.49 & 12.52 & 17.36 & 26.07 & 27.36 & 21.76 \\
\midrule
\multirow{7}{*}{0.1}
     & AMiD ($\alpha=-5.0$) & 29.24 & 26.44  & 13.74 & 16.76 & 29.71 & 30.35 & \textbf{23.40} \\
     & AMiD ($\alpha=-3.0$) & 29.07 & 26.38 & 13.58 & 16.11 & 29.27& 30.14 & 23.10 \\
     & AMiD ($\alpha=-1.0$) & 28.70 & 26.10 & 13.34 & 16.71 & 26.55 & 29.55 & 22.45 \\
     & AMiD ($\alpha=-0.5$) & 28.70 & 26.37 & 13.59 & 17.02 & 27.06 & 28.50 & 22.51 \\
     & AMiD ($\alpha=0.0$)  & 28.86 & 25.77 & 13.57 & 16.14 & 27.26 & 28.52 & 22.25 \\
     & AMiD ($\alpha=0.5$)  & 28.46 & 25.80 & 12.94 & 16.59 & 26.29 & 27.73 & 21.87 \\
     & AMiD ($\alpha=1.0$)  & 24.93 & 22.36 &  9.72 & 16.29 & 15.09 & 16.15 & 15.92 \\

\midrule
\multirow{7}{*}{0.5}
 & AMiD ($\alpha = -5.0$) & 29.38 & 26.41 & 13.81 & 16.44 & 29.19 & 30.58 & \textbf{23.29} \\
 & AMiD ($\alpha = -3.0$) & 29.38 & 26.45 & 13.71 & 16.43 & 28.23 & 30.44 & 23.05 \\
 & AMiD ($\alpha = -1.0$) & 29.11 & 26.31 & 14.09 & 16.70 & 28.68 & 29.89 & 23.13 \\
 & AMiD ($\alpha = -0.5$) & 28.79 & 26.55 & 13.85 & 16.30 & 27.85 & 29.46 & 22.80 \\
 & AMiD ($\alpha = 0.0$) & 28.74 & 25.68 & 13.01 & 16.51 & 26.32 & 27.83 & 21.87 \\
 & AMiD ($\alpha = 0.5$) & 26.73 & 24.13 & 12.06 & 16.19 & 22.88 & 24.82 & 20.02 \\
 & AMiD ($\alpha = 1.0$) & 22.88 & 21.31 & 10.41 & 14.87 & 19.35 & 21.34 & 17.46 \\
\midrule

\multirow{7}{*}{0.9}
 & AMiD ($\alpha = -5.0$) & 29.26 & 26.64 & 13.49 & 16.40 & 28.65 & 30.40 & 23.12 \\
 & AMiD ($\alpha = -3.0$) & 29.14 & 26.02 & 13.62 & 17.03 & 28.99 & 30.59 & \textbf{23.25} \\
 & AMiD ($\alpha = -1.0$) & 29.32 & 26.19 & 13.24 & 15.83 & 29.23 & 29.97 & 22.89 \\
 & AMiD ($\alpha = -0.5$) & 29.12 & 26.19 & 13.32 & 16.40 & 28.15 & 29.68 & 22.75 \\
 & AMiD ($\alpha = 0.0$) & 28.28 & 25.23 & 13.01 & 15.51 & 27.52 & 28.66 & 21.99 \\
 & AMiD ($\alpha = 0.5$) & 22.05 & 19.92 & 10.76 & 12.33 & 25.09 & 24.78 & 18.58 \\
 & AMiD ($\alpha = 1.0$) & 21.52 & 19.17 &  9.16 & 13.60 & 15.53 & 17.73 & 15.04 \\
\bottomrule
\bottomrule
\end{tabular}}
\end{table}

\begin{table*}
    \centering
    \caption{ROUGE-L scores ($\uparrow$) on five task-agnostic instruction-following datasets. \textbf{Bold} and \underline{Underline} mean the best and second-best performance of each column, except the teacher, respectively. All results are based on our own re-implementation. We conduct the evaluation with five random seeds.}
    \label{tab: gpt2_more}
    \resizebox{\textwidth}{!}{%
    \begin{tabular}{l |c| ccccc|c}
    \toprule\toprule
    Model & Val. ($\uparrow$) 
      & Dolly Eval ($\uparrow$) 
      & Self Inst ($\uparrow$)
      & Vicuna ($\uparrow$)
      & Super NI ($\uparrow$)
      & UnNI ($\uparrow$)
      & Avg. ($\uparrow$) \\
    \midrule
      Teacher & $-$ & 27.14 \stdpm{0.15} & 14.55 \stdpm{0.82} & 16.12 \stdpm{0.31} & 27.21 \stdpm{0.25} & 31.41 \stdpm{0.06} & 23.29 \\
    \midrule
    \multicolumn{8}{l}{\textbf{\textit{GPT-2 XL (1.5B) $\rightarrow$ GPT-2 (0.1B)}}}  \\
    \midrule
      SFT       & 25.81 & 23.54 \stdpm{0.42} & 9.62 \stdpm{0.21} & 14.79 \stdpm{0.56} & 18.42 \stdpm{0.23} & 19.33 \stdpm{0.13} & 17.14 \\
       KD        & 25.25 & 23.44 \stdpm{0.33} & 10.12 \stdpm{0.28} & 14.93 \stdpm{0.29} & 16.88 \stdpm{0.24} & 18.87 \stdpm{0.16} & 16.85 \\
       SeqKD     & 26.07 & 24.20 \stdpm{0.31} & 11.12 \stdpm{0.09} & 15.82 \stdpm{0.37} & 19.29 \stdpm{0.09} & 22.74 \stdpm{0.05} & 18.63 \\
       ImitKD    & 23.91 & 22.02 \stdpm{0.29} & 10.34 \stdpm{0.53} & 15.32 \stdpm{0.26} & 17.34 \stdpm{0.26} & 19.68 \stdpm{0.15} & 16.94 \\
       GKD       & 27.06 & 24.58 \stdpm{0.13} & 11.78 \stdpm{0.44} & 14.60 \stdpm{0.37} & 22.84 \stdpm{0.12} & 25.04 \stdpm{0.09} & 19.77 \\
       MiniLLM       & - & 24.47 \stdpm{0.18} & 12.83 \stdpm{0.50} & 16.94 \stdpm{0.40} & 25.58 \stdpm{0.33} & 26.38 \stdpm{0.17} & 21.24 \\
       AKL       & 25.62 & 23.23 \stdpm{0.35} & 11.18 \stdpm{0.21} & 14.94 \stdpm{0.23} & 19.36 \stdpm{0.39} & 22.41 \stdpm{0.08} & 18.22 \\
       TAID      & 28.37 & \underline{25.74} \stdpm{0.27} & \underline{12.91} \stdpm{0.31} & \underline{17.09} \stdpm{0.18} & 23.66 \stdpm{0.31} & 26.82 \stdpm{0.05} & 21.24 \\
       DistiLLM (SKL) & 27.88 & 25.50 \stdpm{0.28} & 12.35 \stdpm{0.39} & 16.10 \stdpm{0.22} & 23.87 \stdpm{0.39} & 26.16 \stdpm{0.06} & 20.80 \\
       DistiLLM (SRKL) & 28.21 & \underline{25.74} \stdpm{0.20} & 12.13 \stdpm{0.23} & 16.34 \stdpm{0.15} & 25.40 \stdpm{0.10} & 26.91 \stdpm{0.12} & 21.30 \\
       ABKD      & \underline{28.61} & 25.49 \stdpm{0.24} & 12.52 \stdpm{0.52} & \textbf{17.36} \stdpm{0.55} & \underline{26.07} \stdpm{0.14} & \underline{27.36} \stdpm{0.10} & \underline{21.76} \\
       \textbf{AMiD (Ours)} & \textbf{29.24} & \textbf{26.44} \stdpm{0.12} & \textbf{13.74} \stdpm{0.49} & 16.76 \stdpm{0.24} & \textbf{29.71} \stdpm{0.08} & \textbf{30.35} \stdpm{0.09} & \textbf{23.40} \\

    \midrule
    \multicolumn{8}{l}{\textbf{\textit{GPT-2 XL (1.5B) $\rightarrow$ GPT-2 Medium (0.3B)}}}  \\
    \midrule
      SFT       & 27.96 & 25.70 \stdpm{0.35} & 12.60 \stdpm{0.37} & 16.51 \stdpm{0.19} & 24.21 \stdpm{0.13} & 27.51 \stdpm{0.17} & 21.31 \\
       KD        & 26.03 & 24.27 \stdpm{0.42} & 10.58 \stdpm{0.10} & 15.59 \stdpm{0.10} & 18.15 \stdpm{0.13} & 20.49 \stdpm{0.24} & 17.82 \\
       SeqKD     & 28.41 & 26.61 \stdpm{0.34} & 13.01 \stdpm{0.46} & 16.42 \stdpm{0.63} & 23.44 \stdpm{0.20} & 26.93 \stdpm{0.08} & 21.28 \\
       ImitKD    & 25.93 & 24.46 \stdpm{0.62} & 12.00 \stdpm{0.41} & 15.56 \stdpm{0.46} & 20.12 \stdpm{0.34} & 25.11 \stdpm{0.16} & 19.45 \\
       GKD       & 27.90 & 25.06 \stdpm{0.55} & 12.36 \stdpm{0.42} & 15.71 \stdpm{0.58} & 23.83 \stdpm{0.26} & 27.14 \stdpm{0.09} & 20.82 \\
     MiniLLM & - & 25.80 \stdpm{0.57} & 14.87 \stdpm{0.35} & 17.62 \stdpm{0.33} & 26.78 \stdpm{0.26} & 30.70 \stdpm{0.11} & 23.15 \\
       AKL       & 27.81 & 25.57 \stdpm{0.10} & 12.06 \stdpm{0.56} & 15.98 \stdpm{0.17} & 22.22 \stdpm{0.20} & 26.17 \stdpm{0.13} & 20.40 \\
       TAID      & 29.45 & \underline{27.01} \stdpm{0.27} & \underline{14.53} \stdpm{0.47} & \underline{17.58} \stdpm{0.20} & 25.14 \stdpm{0.15} & 29.79 \stdpm{0.14} & 22.81 \\
       DistiLLM (SKL) & 29.65 & 26.87 \stdpm{0.13} & 14.11 \stdpm{0.29} & 16.85 \stdpm{0.54} & 25.59 \stdpm{0.22} & 28.84 \stdpm{0.03} & 22.45 \\
       DistiLLM (SRKL) & \underline{29.72} & 26.50 \stdpm{0.20} & 13.79 \stdpm{0.71} & 17.14 \stdpm{0.52} & 26.25 \stdpm{0.11} & 29.31 \stdpm{0.16} & 22.60 \\
       ABKD      & 29.64 & 26.93 \stdpm{0.17} & 13.69 \stdpm{0.32} & 17.45 \stdpm{0.27} & \underline{28.15} \stdpm{0.18} & \underline{30.94} \stdpm{0.06} & \underline{23.43} \\
       \textbf{AMiD (Ours)} & \textbf{30.83} & \textbf{27.34} \stdpm{0.18} & \textbf{15.26} \stdpm{0.46} & \textbf{17.69} \stdpm{0.27} & \textbf{29.04} \stdpm{0.20} & \textbf{33.15} \stdpm{0.13} & \textbf{24.50} \\

    \midrule
    \multicolumn{8}{l}{\textbf{\textit{GPT-2 XL (1.5B) $\rightarrow$ GPT-2 Large (0.8B)}}}  \\
    \midrule
       SFT       & 28.48 & 26.17 \stdpm{0.41} & 13.78 \stdpm{0.21} & 16.64 \stdpm{0.48} & 23.76 \stdpm{0.30} & 26.64 \stdpm{0.12} & 21.40 \\
       KD        & 28.52 & 26.27 \stdpm{0.26} & 13.72 \stdpm{0.44} & 16.43 \stdpm{0.25} & 25.24 \stdpm{0.18} & 28.94 \stdpm{0.09} & 22.12 \\
       SeqKD     & 28.24 & 26.16 \stdpm{0.41} & 13.93 \stdpm{0.56} & 16.35 \stdpm{0.20} & 25.03 \stdpm{0.27} & 28.58 \stdpm{0.06} & 22.01 \\
       ImitKD    & 26.96 & 23.37 \stdpm{0.40} & 13.26 \stdpm{0.60} & 16.00 \stdpm{0.33} & 23.31 \stdpm{0.16} & 27.59 \stdpm{0.14} & 20.71 \\
        GKD       & 29.36 & 26.38 \stdpm{0.24} & 14.44 \stdpm{0.66} & 17.02 \stdpm{0.46} & 26.64 \stdpm{0.16} & 30.99 \stdpm{0.13} & 23.09 \\
       MiniLLM   & - & 26.30 \stdpm{0.35} & 16.50 \stdpm{0.52} & 18.14 \stdpm{0.49} & 29.45 \stdpm{0.17} & 34.40 \stdpm{0.17} & 24.96 \\      
       AKL       & 27.69 & 25.45 \stdpm{0.40} & 13.83 \stdpm{0.82} & 15.85 \stdpm{0.35} & 25.41 \stdpm{0.25} & 28.91 \stdpm{0.05} & 21.89 \\
       TAID    & 29.83 & 26.85 \stdpm{0.32} & 15.07 \stdpm{0.31} & \underline{17.02} \stdpm{0.48} & 26.71 \stdpm{0.23} & 31.09 \stdpm{0.17} & 23.35 \\
       DistiLLM (SKL)       & 29.69 & 26.12 \stdpm{0.27} & \underline{15.69} \stdpm{0.75} & 16.91\stdpm{0.43} & 27.23 \stdpm{0.18} & 30.73 \stdpm{0.12} & 23.34 \\
       DistiLLM (SRKL)   & \underline{30.59} & 27.09 \stdpm{0.40} & 14.61 \stdpm{0.66} & 16.39 \stdpm{0.27} & 28.44 \stdpm{0.45} & 31.04 \stdpm{0.06} & 23.51 \\      
       ABKD       & 30.49 & \underline{27.67} \stdpm{0.34} & 15.46 \stdpm{0.81} & \textbf{17.43} \stdpm{0.25} & \underline{30.74} \stdpm{0.22} & \underline{33.11} \stdpm{0.15} & \underline{24.88} \\
     \textbf{AMiD (Ours)} & \textbf{31.10} & \textbf{27.86} \stdpm{0.29} & \textbf{16.46} \stdpm{0.41} & 16.62 \stdpm{0.50} & \textbf{32.64} \stdpm{0.26} & \textbf{35.64} \stdpm{0.07} & \textbf{25.84} \\

    \bottomrule\bottomrule
    \end{tabular}}
    \label{tab: all baselines}
\end{table*}
\end{document}